\documentclass[preprint,10pt,3p]{elsarticle}
\usepackage{geometry}
\geometry{a4paper,total={6in,8in},top=0.75in,bottom=0.75in}

\usepackage{booktabs}
\usepackage[table]{xcolor}
\usepackage{tabularx}
\usepackage{graphicx}
\usepackage{multirow}
\usepackage{caption}
\usepackage{amsmath}
\usepackage{amssymb}
\usepackage{algorithm}
\usepackage{algpseudocode}

\usepackage{amsfonts}
\usepackage{subfigure}
\usepackage{times}
\usepackage{enumitem}
\usepackage{amsthm}
\usepackage{todonotes}
\usepackage[colorlinks=false]{hyperref}

\newtheorem{definition}{Definition}

\newtheorem{theorem}{Theorem}

\begin{document}

\begin{frontmatter}

\title{CurricuVLM: Towards Safe Autonomous Driving via Personalized Safety-Critical Curriculum Learning with Vision-Language Models}

\author[a]{Zihao Sheng\textsuperscript{†}}
\ead{zihao.sheng@wisc.edu}
\author[a]{Zilin Huang\textsuperscript{†}}
\ead{zilin.huang@wisc.edu}
\author[b]{Yansong Qu}
\ead{qu120@purdue.edu}
\author[c]{Yue Leng}
\ead{lengc@google.com}
\author[c]{Sruthi Bhavanam}
\ead{sruthibhavanam@google.com}
\author[a]{Sikai Chen\corref{cor}}
\ead{sikai.chen@wisc.edu}

\cortext[cor]{Corresponding author: Sikai Chen. {†} Equal contribution.}

\address[a]{Department of Civil and Environmental Engineering, University of Wisconsin-Madison, Madison, WI 53706, USA}
\address[b]{Lyles School of Civil and Construction Engineering, Purdue University, West Lafayette, IN 47907, USA} 
\address[c]{Google, Sunnyvale, CA 94089, USA} 

\begin{abstract}
Ensuring safety in autonomous driving systems remains a critical challenge, particularly in handling rare but potentially catastrophic safety-critical scenarios. While existing research has explored generating safety-critical scenarios for autonomous vehicle (AV) testing, there is limited work on effectively incorporating these scenarios into policy learning to enhance safety. Furthermore, developing training curricula that adapt to an AV's evolving behavioral patterns and performance bottlenecks remains largely unexplored. To address these challenges, we propose CurricuVLM, a novel framework that leverages Vision-Language Models (VLMs) to enable personalized curriculum learning for autonomous driving agents. Our approach uniquely exploits VLMs' multimodal understanding capabilities to analyze agent behavior, identify performance weaknesses, and dynamically generate tailored training scenarios for curriculum adaptation. Through comprehensive analysis of unsafe driving situations with narrative descriptions, CurricuVLM performs in-depth reasoning to evaluate the AV's capabilities and identify critical behavioral patterns. The framework then synthesizes customized training scenarios targeting these identified limitations, enabling effective and personalized curriculum learning. Extensive experiments on the Waymo Open Motion Dataset show that CurricuVLM outperforms state-of-the-art baselines across both regular and safety-critical scenarios, achieving superior performance in terms of navigation success, driving efficiency, and safety metrics. Further analysis reveals that CurricuVLM serves as a general approach that can be integrated with various RL algorithms to enhance autonomous driving systems. 
The code and demo video are available at: \href{https://zihaosheng.github.io/CurricuVLM/}{\textcolor{magenta}{https://zihaosheng.github.io/CurricuVLM/}}.
\end{abstract}

\begin{keyword}
Autonomous Driving, Vision-Language Models, Safety-Critical Systems, Curriculum Learning, Reinforcement Learning
\end{keyword}

\end{frontmatter}

\section{Introduction}

The advancement of artificial intelligence and sensor technologies has enabled autonomous vehicles (AVs) to achieve remarkable progress over the past decades, particularly in perception, decision-making, and control capabilities \citep{di2021survey,wu2024recent,sheng2024traffic,huang2024toward}. These advancements have facilitated the transition of autonomous driving from theoretical research to preliminary commercial deployment, with leading technology companies like Waymo and Cruise initiating pilot autonomous taxi services in selected cities across the United States \citep{kusano2024comparison,california_dmv_disengagement_2021}. While AVs promise to revolutionize transportation with improved mobility and reduced emissions, their widespread deployment remains constrained by persistent safety-related challenges \citep{ding2023survey,cao2022trustworthy,huang2024human,he2024trustworthy}. 
Real-world traffic environments are inherently complex, presenting a “long tail” of rare but critical scenarios, such as sudden pedestrian crossings or abrupt loss of control by nearby vehicles. These scenarios, while infrequent, often carry significant consequences and present substantial challenges. Due to their urgency and safety-critical nature, instantaneous decision-making is demanded, where subtle misjudgments could lead to fatal outcomes. Even human drivers require extensive training to handle them reliably. In light of these challenges, enhancing AV safety and robustness in these safety-critical scenarios has emerged as a key research focus.

The current literature on autonomous driving safety enhancement broadly divides into two categories: policy-level approaches that focus on designing safer driving policies and scene-level approaches that generate diverse safety-critical scenarios to assess an AV agent's robustness \citep{zhang2023cat}. Regarding policy-level approaches, a common paradigm in this category involves combining traditional reinforcement learning (RL) with specific safety constraints to ensure safer policy outcomes \citep{zhu2020safe,li2022decision}. For instance, constrained Markov Decision Processes (CMDP) and hierarchical RL frameworks are often employed to impose safety guarantees, dynamically adjusting exploration to avoid unsafe states \citep{gu2023safe,isele2018safe}. Recent works introduce trustworthy physics knowledge and interpretable rule-based policies as controllable safety performance lower bounds, ensuring that learned RL policies can not only adapt to complex scenarios but also achieve statistically guaranteed improvements over existing baseline policies \citep{cao2022trustworthy,huang2024trustworthy}. Orthogonal to policy-level approaches, scene-level methods focus on generating diverse scenarios to evaluate AV agents' performance, particularly in rare, accident-prone situations \citep{ma2024evolving}. In recent years, the emergence of advanced generative models, particularly diffusion models, has revolutionized the generation of realistic and complex testing scenarios \citep{zhong2023guided,lu2024scenecontrol}. These models excel at synthesizing diverse accident-prone events with high fidelity, enabling comprehensive stress testing of AV agents under controlled conditions. Such systematic evaluation is crucial for identifying and addressing potential policy weaknesses before real-world deployment \citep{feng2023dense}.

\begin{figure}
  \centerline{\includegraphics[width=0.99\textwidth]{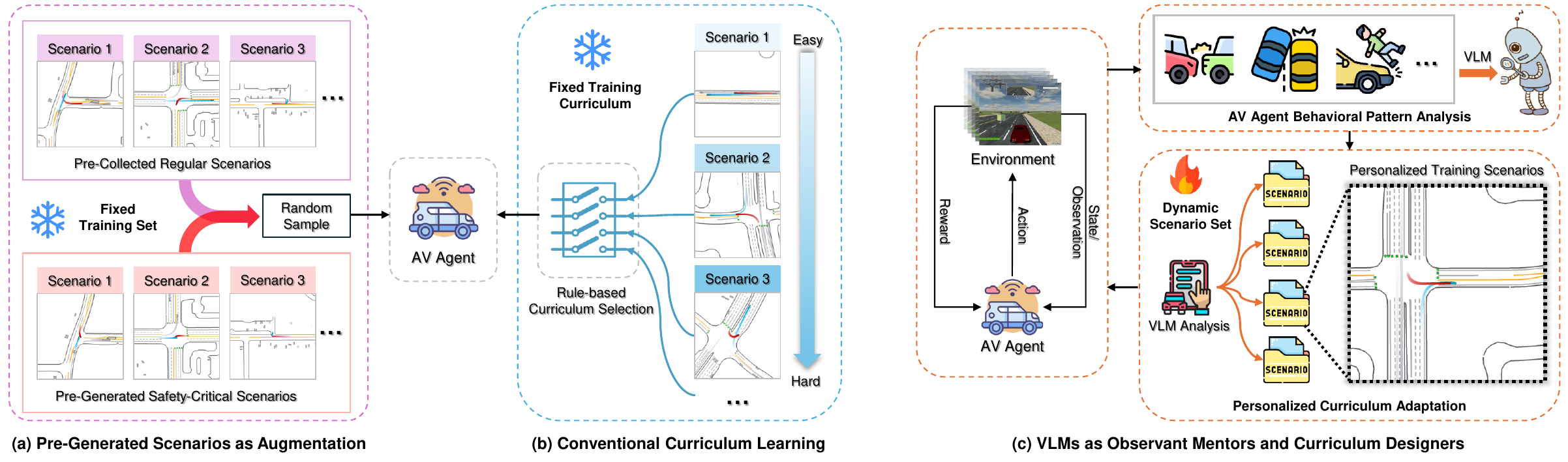}}
  \caption{Comparison of different approaches for safety-critical scenario integration in autonomous driving. (a) Pre-generated safety-critical scenarios as static training augmentation, which fails to adapt to the agent's evolving capabilities; (b) Conventional curriculum learning with rule-based scenario selection, which lacks personalization to individual learning bottlenecks; (c) Our proposed approach using VLMs as observant mentors and curriculum designers for dynamic and personalized training.}
  \label{motivation}
\end{figure}

Despite significant advances in scenario generation and testing, two main challenges remain in effectively integrating these methods into the closed-loop training for AV agents. Firstly, a dynamic mismatch exists between the static training scenario distribution and the evolving capabilities of AV agents. As shown in Fig.~\ref{motivation}(a), a common approach in recent studies is to pre-generate a fixed set of safety-critical scenarios as augmented training samples \citep{zhang2024chatscene}. However, as the AV agent's proficiency grows through learning, these initially challenging scenarios may become insufficient for effectively examining performance boundaries, thereby limiting their contribution to the agent's continual improvement. Secondly, designing an appropriate training curriculum remains challenging. Research in curriculum learning has demonstrated that intelligent agents, including AV agents, benefit from progressive learning that transitions from simple to complex scenarios \citep{wang2021survey}. A critical requirement in this process is that the difficulty level of selected scenarios must align with the agent's current capacity. If scenarios are too challenging or too simple, training efficiency can decline due to optimization failures in intractable situations or learning saturation in trivial cases. As illustrated in Fig.~\ref{motivation}(b), recent efforts have explored solving this issue by ordering scenarios based on predefined rules or heuristics, such as the number of interactive agents or complexity of road conditions \citep{anzalone2022end,wu2024recent}. However, the static manually-crafted metrics often struggle to adapt to the agent's evolving proficiency levels and individual learning bottlenecks across different driving skills.

Recently, the emergence of Large Language Models (LLMs) and Vision-Language Models (VLMs) has opened new avenues for enhancing autonomous driving through their powerful capabilities in understanding complex visual scenes and reasoning within open-world environments \citep{zhou2024vision,cui2024survey,huang2024vlm}. Unlike traditional computer vision approaches that often focus on individual tasks such as detection or captioning, VLMs leverage Internet-scale multimodal pretraining to develop comprehensive visual understanding and reasoning capabilities. Several recent studies have shown VLMs' potential in driving scene understanding, especially in identifying and explaining safety-critical situations \citep{xiao2024hazardvlm,shi2024scvlm}. Their capacity to generate detailed natural language descriptions of driving scenarios makes them ideal tools for analyzing and explaining agent behavior patterns. Despite these advantages, the integration of VLMs into closed-loop autonomous driving training pipelines, particularly in the context of safety-critical scenarios, remains largely unexplored.

Building upon these insights, we propose CurricuVLM, a novel framework that leverages the potential of VLMs for personalized safety-critical curriculum learning in autonomous driving. Inspired by how human instructors observe, analyze, and design targeted exercises for student drivers, our framework addresses the aforementioned challenges by establishing a closed-loop training pipeline where VLMs serve as both observant mentors and curriculum designers, as illustrated in Fig.~\ref{motivation}(c). Specifically, CurricuVLM continuously monitors the agent's behavior in various driving scenarios, using VLMs to understand and analyze driving situations. These observations are then processed through a specialized GPT-4o-based analyzer that assesses the agent's current capabilities and limitations. Based on this comprehensive analysis, our framework dynamically generates personalized training scenarios that are both challenging and instructive, precisely tailored to the agent's evolving capabilities.

The main contributions of our work can be summarized as follows:
\begin{enumerate}[label=\arabic*)]
    \item We propose a novel framework named CurricuVLM that bridges the gap between scenario generation and driving policy learning by leveraging VLMs to enable personalized safety-critical curriculum learning. \textbf{To the best of our knowledge, this is the first work to utilize VLMs for dynamic curriculum generation in closed-loop autonomous driving training.}
    \item Building upon the CurricuVLM framework, we design a novel two-stage behavior analysis pipeline that combines VLMs' visual understanding capabilities with GPT-4o's reasoning abilities. This approach identifies the agent's performance bottlenecks and generates informed recommendations for curriculum adaptation to align with the agent's evolving capabilities.
    \item To facilitate effective training, we introduce a novel scenario generation approach guided by these personalized recommendations. This approach allows CurricuVLM to create diverse and realistic safety-critical scenarios that specifically target the agent's identified weaknesses, enabling more effective closed-loop training to improve AV safety performance progressively.
    \item Through extensive experiments on the Waymo Open Motion Dataset, we demonstrate CurricuVLM's superior performance over state-of-the-art methods in terms of navigation success, driving efficiency, and safety metrics across both regular and safety-critical scenarios. Our approach shows strong compatibility with various RL algorithms including TD3, PPO, and SAC, demonstrating its potential as a general framework for enhancing autonomous driving systems.
\end{enumerate}

The rest of this paper is organized as follows: Section \ref{sec2} reviews related work in autonomous driving safety, curriculum learning, and language models. Section \ref{sec3} presents the problem formulation and formally defines the key concepts in safety-critical curriculum learning for autonomous driving.  Section \ref{sec4} presents the detailed methodology of our CurricuVLM framework. Section \ref{sec5} presents comprehensive experimental results on the Waymo Open Motion Dataset. Finally, Section \ref{sec6} summarizes the paper and concludes with a discussion of future research directions.

\section{Related works}\label{sec2}
\subsection{Safety-Critical Driving Scenario Generation}

In recent years, research on generating safety-critical driving scenarios has gained attention as a means to evaluate and validate autonomous driving models. This field has attracted significant attention from various stakeholders, with regulatory bodies like NHTSA establishing frameworks for testable cases and pre-crash scenarios \citep{najm2007pre,thorn2018framework}, while industry leaders such as Waymo focusing on reconstructing real-world fatal crashes in simulations \citep{scanlon2021waymo}. Within the academic community, researchers have explored various approaches to advance this field \citep{ding2023survey}. For instance, \cite{ding2020learning} reformulated scenario generation as a reinforcement learning problem, where the generation model acts as an agent and the driving algorithm to be evaluated serves as the environment. SceneGen \citep{tan2021scenegen} proposed a neural autoregressive model to sequentially insert actors into the scene while synthesizing their states. However, SceneGen only focuses on generating static traffic scene snapshots without considering subsequent vehicle trajectories. STRIVE \citep{rempe2022generating} addressed this limitation by introducing a learned traffic motion model to generate safety-critical vehicle trajectories. More recently, TrafficGen \citep{feng2023trafficgen} developed an autoregressive neural architecture to generate both realistic initial states and long-term trajectories. To improve the controllability of scenario generation, diffusion models with guided sampling \citep{zhong2023guided,lu2024scenecontrol} have been leveraged to enable flexible control over generated scenes. Despite these advances, most existing approaches focus primarily on scenario generation and testing without effectively utilizing generated scenarios for closed-loop training of AV agents.

LLMs have recently emerged as powerful tools for enhancing traffic scenario generation through their superior natural language understanding and reasoning capabilities. CTG++ \citep{zhong2023language} pioneered this direction by employing LLMs to translate user queries into differentiable loss functions, which then guide a scene-level diffusion model to generate query-compliant traffic scenarios. Along similar lines, ChatScene \citep{zhang2024chatscene} utilized LLMs to generate textual descriptions for safety-critical scenarios, which are then decomposed into sub-descriptions for behavior, geometry, and spawn positions. These descriptions are matched with a pre-built code snippet database to generate executable Scenic code for CARLA simulation. TTSG \citep{ruan2024traffic} leveraged LLMs for road selection and agent planning, supporting diverse traffic scene generation without requiring predefined spawning points. These works demonstrate the potential of LLMs in autonomous driving, especially their capabilities to understand complex traffic contexts and generate corresponding descriptions. Inspired by these advances, our work aims to further leverage VLMs and LLMs to analyze agent behavior patterns, enabling personalized safety-critical training curricula to enhance autonomous driving safety.

\subsection{Curriculum Learning in Autonomous Driving}

Inspired by the way humans learn progressively from simpler to more complex tasks, curriculum learning \citep{bengio2009curriculum} involves structuring the learning process in stages based on task difficulty or the learner’s capabilities. This methodology has shown great potential in accelerating training and improving performance across various machine learning tasks \citep{wang2021survey}. In autonomous driving, curriculum learning has emerged as a promising approach to tackle the challenge of learning complex driving behaviors \citep{qiao2018automatically,ozturk2021investigating,khaitan2022state,peng2024reward}. For instance, \cite{anzalone2022end} proposed an end-to-end curriculum learning framework that divides reinforcement learning into multiple stages of increasing difficulty to guide the agent towards better driving policies. \cite{peiss2023graph} further demonstrated how to design effective curricula by gradually increasing task complexity in terms of traffic density, driving routes, and spatial constraints. Their curriculum starts with basic skills like lane keeping in sparse traffic, then progressively introduces more complex tasks such as collision avoidance and lane change in denser scenarios. 
However, most works rely on manual curriculum design, which may require significant expert knowledge and make it challenging to generalize across different driving scenarios. To address these limitations, recent research has begun exploring automated curriculum generation. Most notably, CLIC proposed by \cite{niu2024continual} developed a continual driving policy optimization framework with closed-loop individualized curricula. CLIC trains a discriminator network to predict collision probabilities in different scenarios and leverages these predictions to customize individualized curricula for the current AV's capability level.
Different from their work which relies solely on a black-box collision prediction model for curriculum design, our approach leverages VLM's visual understanding and reasoning abilities to assess agent behavior and performance, providing interpretable textual analysis and enabling more comprehensive and targeted curriculum generation.

\subsection{Autonomous Driving with LLMs/VLMs}

LLMs and VLMs have demonstrated remarkable capabilities across various domains, showcasing their potential in natural language understanding, visual reasoning, and complex decision-making tasks \citep{achiam2023gpt,dubey2024llama}. 
Recently, there has been a growing interest in integrating these powerful models into autonomous driving systems to enhance their perception, reasoning, and decision-making capabilities \citep{zhou2024vision}. 
GPT-Driver \citep{mao2023gpt} pioneered this integration by incorporating LLMs into autonomous driving planning, utilizing GPT-based models to generate high-level driving decisions from natural language descriptions of traffic scenarios. 
Building upon this foundation, DiLu \citep{wen2023dilu} introduced a comprehensive framework that combines LLM-based reasoning, reflection, and memory modules, enabling decision-making based on common sense knowledge while continuously accumulating driving experience for self-reflection. 
DriveLM \citep{sima2023drivelm} proposed a novel graph visual question answering task, reformulating autonomous driving's perception, prediction, and planning processes as a sequence of graph-structured question-answering interactions. 
Taking inspiration from human cognition, LeapAD \citep{mei2024continuously} developed a dual-decision architecture that leverages LLMs for in-depth analysis and reasoning while employing lightweight models for rapid experience-based decision-making. 
Furthermore, ELM \citep{zhou2024embodied} enhanced VLMs' spatial perception and long-horizon extrapolation capabilities in driving scenarios through the utilization of large-scale open-world data.
These pioneering works inspire us to leverage the powerful reasoning capabilities of VLMs and LLMs for analyzing agent behaviors and generating personalized safety-critical scenarios.

\section{Problem Formulation}\label{sec3}

We formulate the autonomous driving task as a Markov Decision Process (MDP) \citep{wu2024recent}, defined by the tuple $(\mathcal{S}, \mathcal{A}, \mathcal{P}, R, \gamma)$. In this formulation, $\mathcal{S}$ represents the state space, encompassing all possible states of the driving environment, including the AV's own state and the states of other traffic agents. The action space $\mathcal{A}$ comprises all possible actions the AV can take, such as steering angles, throttle, and braking commands. The state transition probability function $\mathcal{P}: \mathcal{S} \times \mathcal{A} \times \mathcal{S} \rightarrow [0,1]$ defines the probability of transitioning from one state to another given an action. The reward function $R: \mathcal{S} \times \mathcal{A} \rightarrow \mathbb{R}$ assigns a scalar reward to each state-action pair, reflecting the desirability of particular actions in specific states. The discount factor $\gamma \in [0,1)$ determines the importance of future rewards relative to immediate ones. The objective of the AV agent is to learn an optimal policy $\pi^*: \mathcal{S} \rightarrow \mathcal{A}$ that maximizes the expected cumulative discounted reward:
\begin{equation}\label{eq1}
    \pi^* = \arg\max_{\pi} \mathbb{E}_{\tau \sim \mathcal{P}(\cdot|\pi)}\left[ \sum_{t=0}^\infty \gamma^t R(s_t, \pi(s_t)) \right],
\end{equation}
where $\tau = (s_0, a_0, s_1, ...)$ denotes a trajectory unrolled by policy $\pi$ under the transition dynamics $\mathcal{P}$.

To better formulate our problem of learning safe driving policies through personalized curriculum learning, we first introduce several key concepts that will be used throughout this paper:
\begin{definition}[Driving Scenario]
A driving scenario is defined as a tuple $\xi = (M, V)$, where $M$ represents the HD map containing static environment elements (e.g., road geometry, traffic signs, traffic lights), and $V = \{v_1, ..., v_N\}$ denotes the set of traffic participants and their states over temporal duration $T$. Each participant $v_i$ is characterized by its state trajectory $\{s_t^i\}_{t=1}^T$, including position, velocity, and orientation.
\end{definition}

\begin{definition}[Curriculum]\label{def2}
A curriculum $\mathcal{C} = \{\xi_1, \xi_2, ..., \xi_N\}$ is a sequence of scenarios, where each scenario $\xi_i$ can be either a regular driving scenario or a personalized safety-critical scenario based on specific learning objectives. The curriculum dynamically adapts to the agent's evolving capabilities, progressively introducing more challenging safety-critical scenarios to optimize the agent's learning process.
\end{definition}

Building upon these definitions and the base MDP formulation, we aim to enhance autonomous driving safety through effective training in safety-critical scenarios. However, a fundamental challenge in achieving this goal is the inherent rarity of safety-critical scenarios in real-world driving environments, which leads to their under-representation in training data distributions.
To address this challenge, we need to create more safety-critical scenarios and strategically inject them into a curriculum. Therefore, we propose to leverage VLMs to analyze the agent's behavior and adaptively construct personalized curricula. Let $f_{\text{VLM}}: \mathcal{O} \rightarrow \mathcal{D}$ be a VLM that maps visual observations $\mathcal{O}$ of the agent's behavior in scenarios to textual descriptions $\mathcal{D}$. These descriptions are then processed by GPT-4o through function $g_{\text{GPT-4o}}: \mathcal{D} \rightarrow \mathcal{I}$ to generate structured insights $\mathcal{I}$ about the agent's behavioral patterns and performance weaknesses. Based on these insights, we aim to construct a curriculum $\mathcal{C}$ that targets the identified weaknesses of the agent.

By learning through a carefully designed curriculum, we can extend the standard MDP objective in Eq.~\eqref{eq1} as follows:
\begin{equation}
    \pi^* = \arg\max_{\pi} \mathbb{E}_{\xi \sim \mathcal{C}(\pi, f_{\text{VLM}}, g_{\text{GPT-4o}})} \left[ \sum_{t=0}^T \gamma^t R(s_t, \pi(s_t)) \right],
\end{equation}
where $\mathcal{C}(\pi, f_{\text{VLM}}, g_{\text{GPT-4o}})$ represents our dynamic curriculum that evolves based on the agent's current policy and the VLM-based analysis. This formulation establishes a closed-loop learning system that continuously adapts to the agent's capabilities while maintaining interpretability through the VLM-based analysis framework.

\begin{figure}
  \centerline{\includegraphics[width=0.99\textwidth]{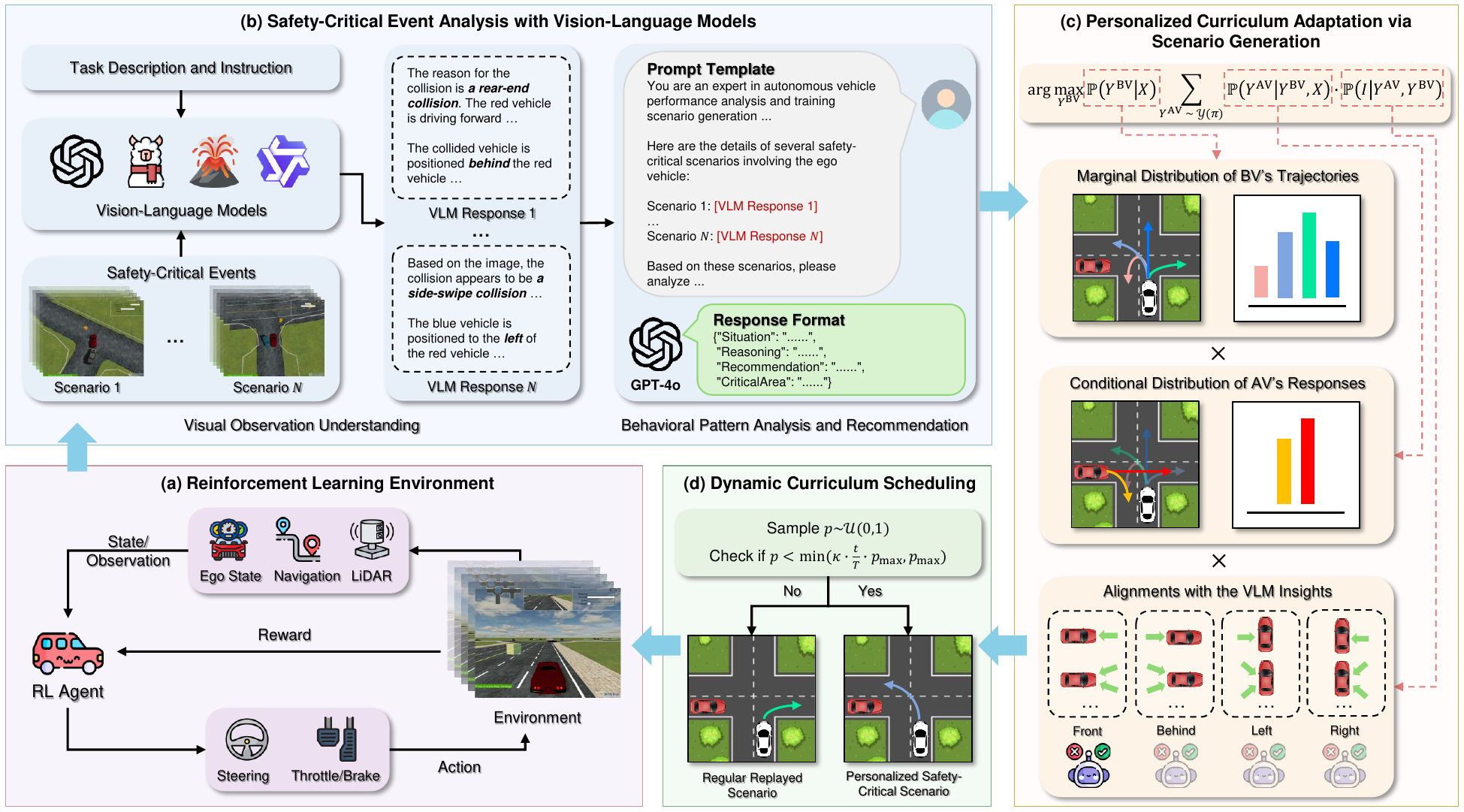}}
  \caption{Overview of CurricuVLM framework. (a) RL environment provides state observations and receives control actions from the agent; (b) Safety-critical event analysis module employs VLM for visual understanding and GPT-4o for behavioral pattern analysis; (c) Personalized curriculum adaptation generates safety-critical scenarios by optimizing background vehicle trajectories; (d) Dynamic curriculum scheduling mechanism adaptively integrates generated scenarios into the training process.}
  \label{framework}
\end{figure}

\section{Methodology}\label{sec4}

\subsection{Framework Overview}

Inspired by how human drivers improve their skills through experience and targeted practice, we propose CurricuVLM, a framework that enables autonomous driving agents to learn from their mistakes and systematically enhance their safety capabilities. Just as a driving instructor would observe a student's performance, identify recurring issues, and design specific exercises to address these weaknesses, our framework operates through three key mechanisms: (1) analyzing safety-critical events using Vision-Language Models to understand the agent's behavioral patterns, (2) generating personalized training scenarios that target identified weaknesses, and (3) dynamically integrating these scenarios into the training curriculum.

As illustrated in Fig.~\ref{framework}, the framework forms a closed-loop learning system. When the agent encounters safety-critical situations during training in Fig.~\ref{framework}(a), our VLM-based analysis module in Fig.~\ref{framework}(b) generates detailed descriptions of these events and employs GPT-4o to identify behavioral patterns and suggest improvements. These insights guide our scenario generation module in Fig.~\ref{framework}(c) in creating personalized training scenarios, which are then adaptively incorporated into the training curriculum as shown in Fig.~\ref{framework}(d). Through this iterative process, the agent progressively develops more robust driving capabilities, particularly in handling safety-critical situations.

\subsection{Safety-Critical Event Analysis with Vision-Language Models}\label{sec4.2}

\subsubsection{Visual Observation Understanding}

During each training iteration, the AV agent interacts with diverse driving scenarios from the curriculum $\mathcal{C}$, generating a series of episodes $\mathcal{E} = \{e_i\}_{i=1}^N$. Each episode $e_i$ consists of a sequence of visual observations $\mathcal{O}_i = \{o_t\}_{t=1}^T$, where $o_t$ represents the visual observation of the environment at time step $t$. These episodes become safety-critical when safety violations occur, either arising from the agent's suboptimal behavior leading to traffic violations or collisions, or emerging from challenging interactions with surrounding vehicles in our designed curriculum scenarios.

To systematically analyze the agent's behavior and identify performance weaknesses, we employ a Vision-Language Model $f_{\text{VLM}}$ to generate comprehensive descriptions of critical moments in these episodes, as shown in Fig.~\ref{framework}(b). Specifically, we employ GPT-4o as our primary vision-language model due to its superior performance in comprehensive scene understanding and detailed description generation. Our approach particularly focuses on interactions with critical objects, i.e., traffic participants whose interactions with the AV agent lead to unsafe situations. Given a task description and instruction $l$, the VLM processes the visual observations to generate comprehensive narrative descriptions for each safety-critical event:
\begin{equation}\label{eq3}
    d_i = f_{\text{VLM}}(\l, \mathcal{O}^{\leq k}_i),
\end{equation}
where $\mathcal{O}^{\leq k}_i$ contains the sequence of key frames with length less than or equal to $k$ that capture the safety-critical event and its context. The task description $l$ guides the VLM to focus on the type and nature of safety violations, spatial relationships between vehicles, and causal factors contributing to unsafe situations.

The generated description $d_i$ provides a structured characterization of the event, including: (1) the type and nature of the safety violation (e.g., rear-end collision, side-swipe), (2) the relative positioning and behavior of critical objects, (3) the AV's response patterns, and (4) the contextual factors contributing to the incident. For instance, as illustrated in Fig.~\ref{framework}(b), in a failed lane change scenario, the VLM might generate: ``The incident involves a rear-end collision during a lane change maneuver. The red vehicle (AV) is driving forward attempting to merge, while the collided vehicle is positioned behind in the target lane. The collision occurred due to insufficient acceleration and improper speed matching during the lane transition."

\subsubsection{Behavioral Pattern Analysis and Recommenddation}

Building upon the VLM's descriptions of critical events and identified critical objects, we leverage GPT-4o to perform systematic behavioral pattern analysis. Our framework innovatively adopts a batch-wise analysis strategy by accumulating $N$ safety-critical events before initiating the analysis phase, as illustrated in Fig.~\ref{framework}(b). This approach enables us to process a sequence of recent event descriptions, denoted as $\mathcal{D}_{1:N} = \{d_i\}_{i=1}^N$. The batch-wise strategy offers two key advantages: (1) it enables the identification of recurring behavioral patterns that might be overlooked in isolated incident analysis, and (2) it optimizes computational efficiency by reducing the frequency of model queries.

To facilitate structured analysis and ensure the quality and consistency of generated responses, we design a comprehensive prompt template that consists of three key components. First, we establish the analysis context by positioning the model as an expert in autonomous vehicle performance analysis and scenario generation, providing essential background for understanding safety-critical scenarios in autonomous driving. Second, we provide a systematic presentation of safety-critical events by incorporating the detailed descriptions from visual observation understanding: ``Scenario 1: [VLM Response 1] ... Scenario $N$: [VLM Response $N$]''. This organization ensures that all relevant spatial relationships, vehicle behaviors, and safety violations are clearly presented. Third, we guide the analysis process with specific instructions for behavioral pattern identification and improvement recommendation generation, emphasizing the importance of both immediate safety concerns and long-term behavioral improvements.

The behavioral analysis process can be formalized as:
\begin{equation}\label{eq4}
    \mathcal{I} = g_{\text{GPT-4o}}(P(\mathcal{D}_{1:N})),
\end{equation}
where $P(\cdot)$ represents our carefully designed prompt template that structures the input descriptions $\mathcal{D}_{1:N}$, and $g_{\text{GPT-4o}}(\cdot)$ denotes the analysis process that generates insights $\mathcal{I}$.

To ensure comprehensive and systematic analysis, we specify a structured response format that guides the reasoning process through four key components. The response format begins with \textit{Situation} assessment, which provides a comprehensive examination of common patterns across the $N$ safety-critical events. This component identifies recurring scenarios and behavioral tendencies in the agent's interactions with other traffic participants. This is followed by \textit{Reasoning}, where in-depth causal analysis examines the underlying factors contributing to safety violations and behavioral patterns in the agent's decision-making process. Building upon this understanding, the format requires a specific \textit{Recommendation} generation to produce suggestions for improving agent performance. Lastly, the \textit{CriticalArea} component distills these recommendations to identify key regions that require focused training attention. These identified regions provide actionable guidance for generating personalized safety-critical scenarios in the subsequent curriculum adaptation process.

This structured format ensures that the analysis systematically progresses from pattern recognition through causal understanding to concrete improvement strategies. For instance, given a series of safety-critical events, the analysis might generate: ``Situation: The AV agent has been involved in multiple rear-end collisions during lane merge scenarios, particularly when other vehicles merge into its lane. Reasoning: The agent consistently fails to maintain safe following distances and demonstrates delayed responses to merging vehicles, suggesting limitations in its predictive capabilities. Recommendation: The agent should be trained to proactively adjust its speed and position when detecting potential merging intentions from neighboring vehicles, with emphasis on early recognition of lane change indicators. CriticalArea: The analysis indicates particular vulnerability in handling vehicles merging from adjacent lanes, specifically in the `Left/Right' spatial regions relative to the ego vehicle.''

\subsection{Personalized Curriculum Adaptation via Scenario Generation}\label{sec4.3}

Building upon the comprehensive behavioral analysis and recommendations from GPT-4o, a critical challenge emerges: \textit{how to effectively translate these high-level recommendations into concrete, executable scenarios that target the agent's identified weaknesses.} To address this challenge, we propose a curriculum adaptation mechanism that dynamically generates personalized training scenarios based on the generated insights $\mathcal{I}$.

To transform GPT-4o's recommendations into executable scenarios, we formulate it as a conditional trajectory generation problem. Let $X = (M, S^{\text{AV}}_{1:t}, S^{\text{BV}}_{1:t})$ denote the historical information up to timestep $t$, including the HD map $M$ and past states of both AV agent and nearby background vehicles (BVs). Let $Y^{\text{AV}} = S^{\text{AV}}_{t:T}$ and $Y^{\text{BV}} = S^{\text{BV}}_{t:T}$ represent their future trajectories respectively. Given the behavioral insights and historical information, the distribution of future trajectories in the generated personalized scenario can be expressed as:
\begin{equation}
    \mathbb{P}(Y^{\text{AV}}, Y^{\text{BV}}|\mathcal{I}, X).
\end{equation}
To generate scenarios that effectively align with current insight $\mathcal{I}$, we seek to maximize this probability by finding an optimal BV trajectory:
\begin{equation}\label{eq6}
Y^{\text{BV}*} = \arg\max_{Y^{\text{BV}}} \sum_{Y^{\text{AV}} \sim \mathcal{Y}(\pi)} \mathbb{P}(Y^{\text{AV}}, Y^{\text{BV}}|\mathcal{I}, X),
\end{equation}
where $\mathcal{Y}(\pi)$ denotes the set of possible trajectories generated by the current AV policy $\pi$.

However, directly optimizing the objective in Eq.~\eqref{eq6} is intractable due to the complex interdependencies between the AV's trajectory and surrounding BVs. Inspired by recent works \citep{sun2022m2i,zhang2023cat}, we observe that in safety-critical scenarios, the danger often originates from nearby BVs' aggressive or unexpected behaviors, which require appropriate responsive actions from the AV agent to maintain safety. For instance, when a BV suddenly cuts in at a close distance, the AV agent must quickly recognize the danger and execute appropriate defensive maneuvers such as emergency braking or evasive steering. This observation suggests a predominantly unidirectional dependency in the interaction pattern, i.e., the AV agent needs to adapt its behavior based on the observed and predicted trajectories of surrounding vehicles to ensure safety.

Based on this observation, we can decompose the complex joint distribution into more manageable components through factorization and Bayes' rule:
\begin{equation}
    \mathbb{P}(Y^{\text{AV}}, Y^{\text{BV}}|\mathcal{I}, X) \propto 
    \mathbb{P}(Y^{\text{BV}}|X) \cdot
    \mathbb{P}(Y^{\text{AV}}|Y^{\text{BV}}, X) \cdot
    \mathbb{P}(\mathcal{I}|Y^{\text{AV}}, Y^{\text{BV}}).
\end{equation}
As illustrated in Fig.~\ref{framework}(c), this factorization decomposes the scenario generation process into three interpretable components: $\mathbb{P}(Y^{\text{BV}}|X)$ represents the marginal distribution of BV's trajectories, $\mathbb{P}(Y^{\text{AV}}|Y^{\text{BV}}, X)$ captures the conditional distribution of the AV's responses to these trajectories, and $\mathbb{P}(\mathcal{I}|Y^{\text{AV}}, Y^{\text{BV}})$ serves as a probability score measuring how well the generated scenario aligns with the identified behavioral insights. By substituting this factorized form into Eq.~\eqref{eq6}, we obtain a more tractable optimization objective:
\begin{equation}\label{eq8}
\begin{split}
Y^{\text{BV}*} &= \arg\max_{Y^{\text{BV}}} \sum_{Y^{\text{AV}} \sim \mathcal{Y}(\pi)} \mathbb{P}(Y^{\text{AV}}, Y^{\text{BV}}|\mathcal{I}, X)\\
&= \arg\max_{Y^{\text{BV}}} \mathbb{P}(Y^{\text{BV}}|X)
    \sum_{Y^{\text{AV}} \sim \mathcal{Y}(\pi)}
    \mathbb{P}(Y^{\text{AV}}|Y^{\text{BV}}, X) \cdot
    \mathbb{P}(\mathcal{I}|Y^{\text{AV}}, Y^{\text{BV}}).\\
\end{split}
\end{equation}

The first term $\mathbb{P}(Y^{\text{BV}}|X)$ can be regarded as a prior distribution of BVs' trajectories conditioned on historical information. This distribution needs to capture both the physical feasibility of vehicle motion and the diverse patterns of human driving behaviors. While recent advances in trajectory prediction have demonstrated significant progress in modeling such distributions \citep{gu2021densetnt,sheng2024ego,sheng2024kinematics}, developing a new trajectory prediction model from scratch would be potentially redundant. Therefore, we leverage existing well-validated models to ensure reliable and efficient scenario generation. Specifically, we adopt the pre-trained DenseTNT model \citep{gu2021densetnt} as our trajectory prior generator. DenseTNT is particularly suitable for our task due to its strong capability in modeling multi-modal behaviors while ensuring physical feasibility through vectorized HD map encoding. To approximate the trajectory prior distribution $\mathbb{P}(Y^{\text{BV}}|X)$, we utilize DenseTNT to generate a diverse set of $K$ candidate trajectories $\{Y^{\text{BV}}_k\}_{k=1}^K$ for each BV, along with their corresponding likelihood scores $\{p^{\text{BV}}_k\}_{k=1}^K$. These trajectory-probability pairs form an empirical approximation of the prior distribution.

The second term $\mathbb{P}(Y^{\text{AV}}|Y^{\text{BV}}, X)$ represents the conditional distribution of AV's responses given the trajectories of BVs. Unlike the BV trajectories which can be sampled from a learned prior, modeling this conditional distribution is challenging as the AV agent's capacities continuously evolve throughout the training process. Directly sampling from pre-learned distributions would introduce significant behavioral bias and fail to capture the agent's current performance characteristics. To address this challenge, we propose to approximate this conditional distribution using the agent's most recent behavioral patterns. We maintain a buffer of the agent's latest rollouts $\mathcal{B} = \{(s_1^i, a_1^i, p_1^i, ..., s_T^i, a_T^i, p_T^i)\}_{i=1}^L$, where each rollout consists of a sequence of state-action-probablity pairs. For each action in these rollouts, we compute its cumulative probability that considers both historical context and the current policy: $p_t^i = p_{t-1}^i \cdot \pi(a_t^i|s_t^i),$ where $\pi(a_t^i|s_t^i)$ is the probability of taking action $a_t^i$ under the current policy given state $s_t^i$. This rolling buffer mechanism ensures that our approximated response distribution remains synchronized with the agent's evolving capabilities.

The third term $\mathbb{P}(\mathcal{I}|Y^{\text{AV}}, Y^{\text{BV}})$ evaluates how well the generated scenario aligns with the behavioral insights obtained from our GPT-4o analysis framework. This probability score assesses whether the interaction between AV and BV trajectories effectively creates situations that target the identified behavioral weaknesses. For each candidate AV trajectory $Y^{\text{AV}}_j$ and BV trajectory $Y^{\text{BV}}_i$, we evaluate their alignment with the identified critical areas $\mathcal{I}$ through a probability function:
\begin{equation}\label{eq9}
    P^{\mathcal{I}}_{i,j} = \begin{cases}
        \lambda^t & \text{if }\mathbb{I}(t, Y^{\text{AV}}_j, Y^{\text{BV}}_i, \mathcal{I}) = \text{True} \\
        0 & \text{otherwise}
    \end{cases}
\end{equation}
where $\lambda \in (0,1)$ is a decay factor reflecting increasing uncertainty over time, and $\mathbb{I}(\cdot)$ is a binary function that determines the spatial alignment between potential collisions and identified critical areas in $\mathcal{I}$. During trajectory evaluation, this function first detects collision events between AV and BV trajectories at timestep $t$, then assesses whether the BV's relative position (Front, Behind, Left, or Right) with respect to the AV corresponds to any critical area specified in $\mathcal{I}$, returning True only when both conditions are satisfied.

Building upon the three factorized probability components discussed above, we propose an efficient scoring mechanism to evaluate candidate BV trajectories based on their alignment with the current behavioral insights $\mathcal{I}$. The scoring function is formulated as:
\begin{equation}\label{eq10}
\text{Score}(Y^{\text{BV}}_i) =  p^{\text{BV}}_i \sum_{j=1}^L \sum_{t=1}^T p_t^j \cdot P_{i,j}^{\mathcal{I}},
\end{equation}
where $p^{\text{BV}}_i$ represents the prior probability of the candidate BV trajectory obtained from DenseTNT, and $p_t^j$ denotes the cumulative probability of the sampled AV response from the behavior buffer $\mathcal{B}$. 
The optimal BV trajectory $Y^{\text{BV}*}$ for scenario generation is then selected through:
\begin{equation}\label{eq11}
    Y^{\text{BV}*} = \arg\max_{Y^{\text{BV}}_i} \text{Score}(Y^{\text{BV}}_i).
\end{equation}
This formulation allows us to generate personalized safety-critical scenarios that specifically target the agent's identified behavioral weaknesses. By integrating realistic trajectory priors from DenseTNT and explicitly evaluating alignment with the safety-critical patterns identified through our GPT-4o analysis framework, our approach ensures the generation of meaningful and challenging learning experiences. These generated scenarios serve as effective curriculum elements for improving the agent's safety performance. The detailed steps of this scenario generation process are summarized in Algorithm \ref{alg:generate}.

\subsection{Dynamic Curriculum Scheduling for RL}\label{sec4.4}

The effectiveness of our CurricuVLM framework critically depends on how we integrate the generated safety-critical scenarios into the training process. We propose a dynamic scheduling mechanism that adaptively adjusts the composition of training scenarios based on the agent's evolving capabilities. This mechanism needs to address two challenges: (1) establishing and maintaining basic driving competencies through regular scenarios, and (2) developing robust safety-critical handling capabilities through the personalized scenarios generated by our VLM-based analysis framework.

Building upon Definition \ref{def2}, we formalize our curriculum $\mathcal{C}$ as a sequence of training episodes $\{\xi_1, \xi_2, ..., \xi_N\}$, where each episode can be either a regular driving scenario or a personalized safety-critical scenario. 
As illustrated in Fig.~\ref{framework}(d), for each episode, we implement a probability-based selection mechanism to determine the type of scenario to be used. The probability of incorporating a safety-critical scenario for episode $\xi_{i+1}$ is determined by an adaptive triggering mechanism:
\begin{equation}
    p_t = \min(\kappa \cdot \frac{t}{T} \cdot p_{\text{max}}, p_{\text{max}}),
\end{equation}
where $t$ denotes the current training step, $T$ represents the total training steps, $\kappa$ controls the rate of difficulty progression, and $p_{\text{max}}$ sets an upper bound on the triggering probability. This formulation ensures that the exposure to safety-critical scenarios increases steadily with the agent's development while remaining bounded by $p_{\text{max}}$, which creates a progressive learning curriculum consistent with Definition \ref{def2}. When triggered, a new safety-critical scenario $\xi_{i+1}$ is generated using the method detailed in Section \ref{sec4.3} to incorporate the latest behavioral insights $\mathcal{I}$.

The dynamic nature of this scheduling mechanism enables the curriculum to adapt to the agent's learning progress. During early training phases when $p_t$ is low, the agent primarily encounters regular driving scenarios, allowing it to develop fundamental driving skills. As training progresses and $p_t$ increases, the curriculum gradually incorporates more safety-critical scenarios that specifically target the behavioral weaknesses identified through our GPT-4o analysis. 
As proven in \ref{theory}, our scheduling mechanism theoretically guarantees the convergence of learning by ensuring sufficient exposure to both regular and safety-critical scenarios throughout the training process.
The overall algorithmic procedure of our CurricuVLM framework is detailed in Algorithm \ref{alg:curricuVLM}.

\section{Experiments}\label{sec5}

\begin{figure}[!t]
  \centerline{\includegraphics[width=0.99\textwidth]{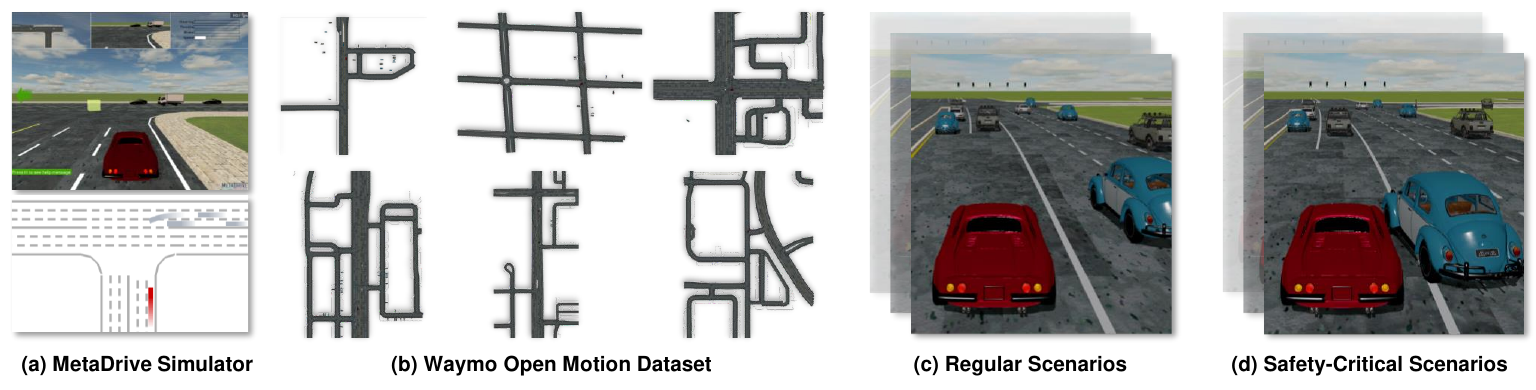}}
  \caption{Experiment environment and evaluation scenarios. (a) MetaDrive simulator with 3D render and top-down view; (b) Representative HD map layouts from Waymo Open Motion Dataset; (c) Regular scenarios with original background vehicle trajectories; (d) Generated safety-critical scenarios with challenging vehicle interactions.}
  \label{exp-env}
\end{figure}

\subsection{Experimental Setups}
\subsubsection{Experiment Environment}
We conduct our experiments using the MetaDrive simulator \citep{li2022metadrive}, an open-source lightweight autonomous driving platform that provides flexible environment configurations and efficient scenario simulation, as shown in Fig.~\ref{exp-env}(a). For training and evaluation, we leverage the Waymo Open Motion Dataset \citep{ettinger2021large}, which provides diverse real-world driving scenarios collected across various traffic conditions and road layouts (Fig.~\ref{exp-env}(b)). Following the existing work \citep{zhang2023cat}, we use 500 representative scenarios from the dataset and create a split of 400 for training and 100 for testing. Each scenario contains comprehensive trajectory information of the ego vehicle and surrounding traffic participants, along with detailed HD map annotations.

\subsubsection{RL Setting}

The observation space consists of three key components that provide comprehensive information about the driving environment. First, the ego state includes the vehicle's dynamic properties such as speed, steering angle, throttle/brake position, and yaw rate, which characterize the vehicle's current motion state. Second, navigation information is represented by the relative distance and heading to the next two waypoints along the planned route, providing guidance for RL agent's future actions. Third, LiDAR point cloud data captures the surrounding environment, enabling the agent to perceive and react to obstacles and other traffic participants.

The agent controls the vehicle through a continuous action space $\mathcal{A} \in [-1, 1]^2$, consisting of steering and throttle/brake commands. The steering action $a_{\text{steer}} \in [-1, 1]$ determines the steering angle, where negative values indicate left turns and positive values indicate right turns. The throttle/brake action $a_{\text{acc}} \in [-1, 1]$ controls longitudinal motion, with positive values applying throttle for acceleration and negative values engaging the brake for deceleration.

We adopt the default reward function from MetaDrive \citep{li2022metadrive}, which combines dense guidance for basic driving behaviors with collision penalties and sparse rewards for episode outcomes:
\begin{equation}
    R(s_t, a_t) = R_{\text{drive}}(s_t, a_t) + R_{\text{collision}} + R_{\text{done}},
\end{equation}
where $R_{\text{drive}}(s_t, a_t)$ denotes the dense reward for basic driving behaviors, $R_{\text{collision}}$ represents the penalty for collisions, and $R_{\text{done}}$ provides rewards based on episode outcomes.

The driving reward term $R_{\text{driving}}$ encourages the AV agent to move forward while maintaining its position at the center of the lane. It is formulated as:
\begin{equation}
    R_{\text{drive}}(s_t, a_t) = \begin{cases} 
    0 & \text{if the episode is done or collision occurs}, \\
    \alpha \cdot d_{\text{lon}} \cdot \text{clip}\left(1 - \frac{2|d_{\text{lat}}|}{d_{\text{max}}}, 0, 1\right) & \text{otherwise}
    \end{cases}
\end{equation}
where $\alpha$ is the driving reward coefficient, $d_{\text{lon}}$ represents the longitudinal distance traveled between current and last timesteps. The $\text{clip}(\cdot)$ function calculates a lateral position factor, where $d_{\text{lat}}$ denotes the lateral distance from the lane center, and $d_{\text{max}}$ is the maximum lateral distance threshold. The lateral factor decreases linearly with the deviation from the lane center and is clipped between 0 and 1, thereby encouraging the vehicle to maintain a central lane position.
The collision reward $R_{\text{collision}}$ provides immediate feedback when unsafe behaviors occur, applying a penalty of -1 whenever a collision is detected while maintaining 0 reward during regular driving. The done reward $R_{\text{done}}$ only triggers when an episode ends, providing a positive reward +10 when the AV successfully reaches its destination, or a negative penalty -10 if the episode terminates due to the agent entering a non-drivable area.

Given that our personalized safety-critical scenarios continuously evolve based on the VLM analysis, we require an online RL algorithm that can efficiently adapt to the changing distribution of training scenarios. Therefore, we adopt the TD3 \citep{fujimoto2018addressing} as our default RL algorithm due to its efficient off-policy learning capability and actor-critic architecture, which are particularly suitable for continuous control tasks in autonomous driving. TD3's ability to learn from off-policy data enables efficient utilization of experiences collected from both regular driving and safety-critical scenarios, facilitating robust learning across diverse driving situations.  
While TD3 serves as the primary algorithm for evaluation, our CurricuVLM framework is designed to be algorithm-agnostic. Our subsequent experimental analysis in Section \ref{sec5.6} will investigate the framework's compatibility with other popular RL algorithms such as PPO and SAC, examining the potential of our CurricuVLM in enhancing various RL algorithms.

\subsubsection{Evaluation Strategy and Metrics}

Our evaluation consists of two phases to comprehensively assess the effectiveness of different methods:
\begin{enumerate}[label=\arabic*)]
 \item \textbf{Regular Scenario Testing}: In this phase, we evaluate each method by having the AV agent navigate through test scenarios while BVs follow their original trajectories from the dataset, as illustrated in Fig.~\ref{exp-env}(c). This provides a baseline assessment of basic driving capabilities under regular conditions.
 \item \textbf{Safety-Critical Scenario Testing}: We transform the test scenarios into safety-critical versions using our scenario generation method in Section \ref{sec4.3}. Fig.~\ref{exp-env}(d) shows examples of such transformed scenarios, where BVs are positioned to create challenging situations. The methods are then evaluated under these conditions to assess their robustness in handling safety-critical situations.
\end{enumerate}

This dual-phase evaluation approach enables us to systematically assess both the fundamental driving competencies and safety-critical handling capabilities of different autonomous driving methods. To quantitatively measure performance in both phases, we employ multiple metrics focusing on safety, navigation performance, and learning capabilities. Safety performance is primarily measured through the \textit{crash rate}, which quantifies the percentage of episodes ending in collisions. The success of navigation tasks is assessed using the \textit{road completion rate} and \textit{total distance} traveled, where higher values indicate better route-following behavior. We also report \textit{average speed} to evaluate the agent's ability to balance efficiency and safety.

To specifically evaluate the learning effectiveness, we introduce two specialized metrics: the \textit{failure-to-success rate} and \textit{success-to-success rate}. The failure-to-success rate measures the percentage of previously failed scenarios where the agent achieves success after training, directly quantifying the agent's improvement in handling challenging situations. The success-to-success rate evaluates the agent's performance consistency by measuring the percentage of scenarios where the agent maintains successful performance, indicating the stability and reliability of the learned policy. Additionally, we use \textit{episode reward} as a comprehensive metric that captures the overall performance of the driving policy.

\subsection{Baselines}
We compare our method against several state-of-the-art baselines across four categories:

\begin{itemize}
    \item \textbf{Traditional RL:} We include SAC \citep{Haarnoja2018Soft} which employs maximum entropy principles to enhance exploration, PPO \citep{Schulman2017Proximal} which uses trust region optimization for stable policy updates, and TD3 \citep{fujimoto2018addressing} which addresses overestimation bias in value function estimation.
    
    \item \textbf{Safe RL:} This category comprises SAC-PID \citep{stooke2020responsive} which incorporates PID controllers into the SAC framework, TD3-Lag \citep{ray2019benchmarking} which employs Lagrangian methods to handle safety constraints, and TD3-PID \citep{stooke2020responsive} which combines TD3 with PID control for enhanced safety guarantees.
    
    \item \textbf{Imitation Learning:} We evaluate against BC \citep{gleave2022imitation} which directly learns from expert demonstrations, GAIL \citep{ho2016generative} which employs adversarial training to match expert behavior, AIRL \citep{fu2017learning} which learns a robust reward function from demonstrations, and SQIL \citep{reddy2019sqil} which incorporates Q-learning into the imitation framework.
    
    \item \textbf{Closed-loop Curriculum Learning:} We include CAT \citep{zhang2023cat} which dynamically generates challenging driving scenarios during training. We also compare with CLIC \citep{niu2024continual} which employs a difficulty predictor to estimate collision probabilities and reweights the scenario sampling distribution accordingly.
\end{itemize}

\begin{table}[!t]
\begin{small}
  \caption{Performance comparison with baselines in the \textit{regular test scenarios}. Mean and standard deviation over 3 seeds. The best results are marked in \textbf{bold}.}
  \label{tab1}
  \centering
  \begin{center}
  \renewcommand{\arraystretch}{1.5}
  \resizebox{\textwidth}{!}{
  \begin{tabular}{cccccccc}
  \toprule
  % \multirow{2}{*}{Category} & 
  \multirow{2}{*}{Model} &  \multirow{2}{*}{\shortstack{Episode\\Reward}} \multirow{2}{*}{$\uparrow$} & \multirow{2}{*}{\shortstack{Road\\Completion (\%)}} \multirow{2}{*}{$\uparrow$} & \multirow{2}{*}{\shortstack{Total\\Distance}} \multirow{2}{*}{$\uparrow$} & \multirow{2}{*}{\shortstack{Crash\\Rate (\%)}} \multirow{2}{*}{$\downarrow$} & \multirow{2}{*}{\shortstack{Average\\Speed}} \multirow{2}{*}{$\uparrow$} & \multirow{2}{*}{\shortstack{Failure-to-\\Success Rate (\%)}} \multirow{2}{*}{$\uparrow$} & \multirow{2}{*}{\shortstack{Success-to-\\Success Rate (\%)}} \multirow{2}{*}{$\uparrow$} \\
  & & & & & & & \\
  \midrule 
  
  % \multirow{3}{*}{\shortstack{Traditional\\RL}} & 
  SAC & 43.3 {\tiny $\pm$ 1.61} & 67.9 {\tiny $\pm$ 1.14} & 44.9 {\tiny $\pm$ 1.07} & 18.3 {\tiny $\pm$ 0.92} & 9.35 {\tiny $\pm$ 0.06} & 38.7 {\tiny $\pm$ 6.32} & 68.9 {\tiny $\pm$ 13.7}   \\
  PPO & 41.1 {\tiny $\pm$ 2.48} & 65.9 {\tiny $\pm$ 2.51} & 42.9 {\tiny $\pm$ 2.01} & 22.6 {\tiny $\pm$ 3.75} & \textbf{10.1} {\tiny $\pm$ 0.24} & 36.0 {\tiny $\pm$ 0.54} & 64.3 {\tiny $\pm$ 7.14} \\
  TD3 & 49.3 {\tiny $\pm$ 0.83} & 72.9 {\tiny $\pm$ 0.44} & 47.9 {\tiny $\pm$ 1.08} & 18.9 {\tiny $\pm$ 1.72} & 8.14  {\tiny $\pm$ 0.80} & 54.1 {\tiny $\pm$ 0.14} & 67.1 {\tiny $\pm$ 3.48} \\
  \midrule 
  
  % \multirow{3}{*}{Safe RL} & 
  SAC-PID & 7.57 {\tiny $\pm$ 3.20} & 24.8 {\tiny $\pm$ 10.3} & 12.9 {\tiny $\pm$ 2.91} & 21.3 {\tiny $\pm$ 1.35} & 5.19  {\tiny $\pm$ 0.57} & 1.45 {\tiny $\pm$ 1.40} & 10.0 {\tiny $\pm$ 14.1} \\
  TD3-Lag & 39.6 {\tiny $\pm$ 0.0} & 46.9 {\tiny $\pm$ 0.00} & 38.9 {\tiny $\pm$ 0.00} & 23.0 {\tiny $\pm$ 0.00} & 5.50 {\tiny $\pm$ 0.00} & 29.0 {\tiny $\pm$ 0.43} & 72.2 {\tiny $\pm$ 20.8} \\
  TD3-PID & -1.00 {\tiny $\pm$ 0.91} & 6.48 {\tiny $\pm$ 0.44} & 4.55 {\tiny $\pm$ 0.37} & 22.8 {\tiny $\pm$ 1.77} & 4.17 {\tiny $\pm$ 1.08} & 0.37 {\tiny $\pm$ 0.52} & 7.41 {\tiny $\pm$ 10.5}  \\
  \midrule
  
  % \multirow{4}{*}{\shortstack{Imitation\\Learning}} & 
   BC    & 29.7 {\tiny $\pm$ 2.52} & 58.6 {\tiny $\pm$ 2.09} & 33.4 {\tiny $\pm$ 1.80} & 20.6 {\tiny $\pm$ 1.18} & 7.42  {\tiny $\pm$ 0.28} & 34.9 {\tiny $\pm$ 1.32} & 41.2 {\tiny $\pm$ 11.6} \\
   GAIL  & 32.5 {\tiny $\pm$ 1.99} & 57.0 {\tiny $\pm$ 1.50} & 35.1 {\tiny $\pm$ 1.82} & 23.4 {\tiny $\pm$ 1.94} & 7.84  {\tiny $\pm$ 0.74} & 20.5 {\tiny $\pm$ 11.8} & 46.7 {\tiny $\pm$ 41.1}  \\
   AIRL  & 27.0 {\tiny $\pm$ 1.94} & 54.8 {\tiny $\pm$ 3.13} & 32.0 {\tiny $\pm$ 1.63} & 20.1 {\tiny $\pm$ 3.06} & 9.80  {\tiny $\pm$ 0.76} & 19.7 {\tiny $\pm$ 9.71} & 16.7 {\tiny $\pm$ 23.6}  \\
   SQIL  & 20.3 {\tiny $\pm$ 0.21} & 44.7 {\tiny $\pm$ 0.78} & 25.1 {\tiny $\pm$ 0.36} & 26.9 {\tiny $\pm$ 1.30} & 6.50 {\tiny $\pm$ 0.38} & 6.98 {\tiny $\pm$ 4.02} & 45.5 {\tiny $\pm$ 15.1} \\
  \midrule 

  % \multirow{2}{*}{\shortstack{Close-Loop}} & 
   CAT  & 49.1 {\tiny $\pm$ 3.89} & 73.0 {\tiny $\pm$ 3.94} & 48.5 {\tiny $\pm$ 3.39} & 15.5 {\tiny $\pm$ 1.72} & 8.45 {\tiny $\pm$ 1.23} & \textbf{55.7} {\tiny $\pm$ 4.55} & 68.8 {\tiny $\pm$ 9.74}  \\
   CLIC  & 43.5 {\tiny $\pm$ 0.49} & 68.3 {\tiny $\pm$ 0.60} & 45.1 {\tiny $\pm$ 0.71} & \textbf{15.1} {\tiny $\pm$ 1.39} & 9.31 {\tiny $\pm$ 0.28} & 39.1 {\tiny $\pm$ 3.45} & 48.3 {\tiny $\pm$ 19.3}  \\
  \rowcolor{gray!15}
   CurricuVLM (ours) & \textbf{52.3} {\tiny $\pm$ 2.28} & \textbf{76.2} {\tiny $\pm$ 2.28} & \textbf{51.0} {\tiny $\pm$ 1.65} & 16.0 {\tiny $\pm$ 0.80} & 9.52 {\tiny $\pm$ 0.14} & 51.3 {\tiny $\pm$ 6.26} & \textbf{75.0} {\tiny $\pm$ 2.91}   \\
  \bottomrule
  \end{tabular}
  }
  \end{center}
\end{small}
\end{table}

\begin{table}[!t]
\begin{small}
  \caption{Performance comparison with baselines in the \textit{safety-critical test scenarios}. Mean and standard deviation over 3 seeds. The best results are marked in \textbf{bold}.}
  \label{tab2}
  \centering
  \begin{center}
  \renewcommand{\arraystretch}{1.5}
  \resizebox{\textwidth}{!}{
  \begin{tabular}{cccccccc}
  \toprule
  % \multirow{2}{*}{Category} & 
  \multirow{2}{*}{Model} &  \multirow{2}{*}{\shortstack{Episode\\Reward}} \multirow{2}{*}{$\uparrow$} & \multirow{2}{*}{\shortstack{Road\\Completion (\%)}} \multirow{2}{*}{$\uparrow$} & \multirow{2}{*}{\shortstack{Total\\Distance}} \multirow{2}{*}{$\uparrow$} & \multirow{2}{*}{\shortstack{Crash\\Rate (\%)}} \multirow{2}{*}{$\downarrow$} & \multirow{2}{*}{\shortstack{Average\\Speed}} \multirow{2}{*}{$\uparrow$} & \multirow{2}{*}{\shortstack{Failure-to-\\Success Rate (\%)}} \multirow{2}{*}{$\uparrow$} & \multirow{2}{*}{\shortstack{Success-to-\\Success Rate (\%)}} \multirow{2}{*}{$\uparrow$} \\
  & & & & & & & \\
  \midrule
  
  % \multirow{3}{*}{\shortstack{Traditional\\RL}} & 
  SAC & 38.4 {\tiny $\pm$ 1.97} & 63.2 {\tiny $\pm$ 1.21} & 40.9 {\tiny $\pm$ 1.34} & 30.5 {\tiny $\pm$ 2.33} & 9.25 {\tiny $\pm$ 0.07}   & 30.4 {\tiny $\pm$ 7.00} & 56.9 {\tiny $\pm$ 15.1}   \\
  PPO & 38.4 {\tiny $\pm$ 0.86} & 62.7 {\tiny $\pm$ 1.05} & 40.0 {\tiny $\pm$ 0.70} & 32.0 {\tiny $\pm$ 2.02} & \textbf{9.94} {\tiny $\pm$ 0.30}  & 26.7 {\tiny $\pm$ 0.89} & 41.7 {\tiny $\pm$ 8.33} \\
  TD3 & 42.4 {\tiny $\pm$ 1.01} & 65.2 {\tiny $\pm$ 1.40} & 42.6 {\tiny $\pm$ 1.26} & 39.7 {\tiny $\pm$ 1.04} & 8.02 {\tiny $\pm$ 0.77}   & 28.6 {\tiny $\pm$ 2.79} & 64.3 {\tiny $\pm$ 21.4} \\
  \midrule 
  
  % \multirow{3}{*}{Safe RL} & 
  SAC-PID & 7.46 {\tiny $\pm$ 3.05} & 19.0 {\tiny $\pm$ 4.19} & 12.7 {\tiny $\pm$ 2.82} & 30.4 {\tiny $\pm$ 3.27} & 5.22 {\tiny $\pm$ 0.54} & 2.13 {\tiny $\pm$ 3.01} & 16.7 {\tiny $\pm$ 23.6} \\
  TD3-Lag & 29.1 {\tiny $\pm$ 0.00} & 39.0 {\tiny $\pm$ 0.00} & 30.7 {\tiny $\pm$ 0.00} & 51.0 {\tiny $\pm$ 0.00} & 5.70 {\tiny $\pm$ 0.00}  & 8.42 {\tiny $\pm$ 0.48} & 66.7 {\tiny $\pm$ 47.1} \\
  TD3-PID & -0.68 {\tiny $\pm$ 1.00} & 5.50 {\tiny $\pm$ 0.90} & 4.58 {\tiny $\pm$ 0.35} & 29.4 {\tiny $\pm$ 5.83} & 4.18 {\tiny $\pm$ 1.06}  & 1.39 {\tiny $\pm$ 1.32} & 13.3 {\tiny $\pm$ 18.9}  \\
  \midrule
  
  % \multirow{4}{*}{\shortstack{Imitation\\Learning}} & 
   BC    &  27.4 {\tiny $\pm$ 2.69} & 53.2 {\tiny $\pm$ 1.81} & 31.0 {\tiny $\pm$ 2.17} & 33.7 {\tiny $\pm$ 0.27} & 7.30 {\tiny $\pm$ 0.26}    & 26.5 {\tiny $\pm$ 3.43} & 32.3 {\tiny $\pm$ 15.5} \\
   GAIL  &  29.8 {\tiny $\pm$ 1.07} & 52.3 {\tiny $\pm$ 1.42} & 32.7 {\tiny $\pm$ 1.05} & 33.7 {\tiny $\pm$ 1.78} & 7.77 {\tiny $\pm$ 0.73}    & 16.7 {\tiny $\pm$ 7.16} & 4.76 {\tiny $\pm$ 6.73}  \\
   AIRL  &  26.3 {\tiny $\pm$ 1.84} & 52.8 {\tiny $\pm$ 3.16} & 31.2 {\tiny $\pm$ 1.50} & 27.8 {\tiny $\pm$ 3.51} & 9.74 {\tiny $\pm$ 0.72}   & 18.8 {\tiny $\pm$ 9.37}  & 33.3 {\tiny $\pm$ 47.1}  \\
   SQIL  &  18.9 {\tiny $\pm$ 0.56} & 41.4 {\tiny $\pm$ 1.01} & 23.5 {\tiny $\pm$ 0.52} & 38.5 {\tiny $\pm$ 1.67} & 6.54 {\tiny $\pm$ 0.34} & 7.12 {\tiny $\pm$ 2.62} & 41.7 {\tiny $\pm$ 18.0} \\
  \midrule 

  % \multirow{2}{*}{\shortstack{Close-Loop}} & 
   CAT  &  42.5 {\tiny $\pm$ 3.95} & 66.6 {\tiny $\pm$ 4.37} & 43.4 {\tiny $\pm$ 3.48} & 32.1 {\tiny $\pm$ 2.08} & 8.36 {\tiny $\pm$ 1.17}  & 35.2 {\tiny $\pm$ 3.44} & 67.5 {\tiny $\pm$ 7.50}  \\
   CLIC  & 39.3 {\tiny $\pm$ 0.72} & 64.3 {\tiny $\pm$ 0.40} & 41.6 {\tiny $\pm$ 0.78} & 26.2 {\tiny $\pm$ 1.17} & 9.21 {\tiny $\pm$ 0.26} & 34.7 {\tiny $\pm$ 2.67} & 61.9 {\tiny $\pm$ 26.9}  \\
  \rowcolor{gray!15}
  CurricuVLM (ours) & \textbf{48.9} {\tiny $\pm$ 1.53} & \textbf{73.4} {\tiny $\pm$ 1.66} & \textbf{48.4} {\tiny $\pm$ 1.31} & \textbf{25.1} {\tiny $\pm$ 1.17} & 9.45 {\tiny $\pm$ 0.16} & \textbf{39.1} {\tiny $\pm$ 0.66} & \textbf{73.5} {\tiny $\pm$ 21.1}   \\
  \bottomrule
  \end{tabular}
  }
  \end{center}
\end{small}
\end{table}

\subsection{Performance Comparison}
We present a comprehensive performance comparison between our proposed CurricuVLM and baseline methods across two evaluation phases, with results shown in Tables \ref{tab1}-\ref{tab2} and Figs.~\ref{exp-RL}-\ref{exp-CloseLoop}. Except for BC, all methods were trained for 1 million steps, with experiments repeated three times using different random seeds to ensure statistical reliability. Mean values and standard deviations are reported. Table \ref{tab1} presents the performance metrics under regular driving conditions where BVs follow their original trajectories, while Table \ref{tab2} shows the results in transformed safety-critical scenarios. Similarly, in Figs.~\ref{exp-RL}-\ref{exp-CloseLoop}, we illustrate the learning curves throughout the training process, where solid lines represent mean values and shaded regions indicate the standard deviations across three runs. For each figure, the top row demonstrates the performance under regular driving conditions, and the bottom row shows the corresponding results in safety-critical scenarios.

\subsubsection{Comparison with Traditional RL Methods}

\begin{figure}[!t]
  \centerline{\includegraphics[width=0.99\textwidth]{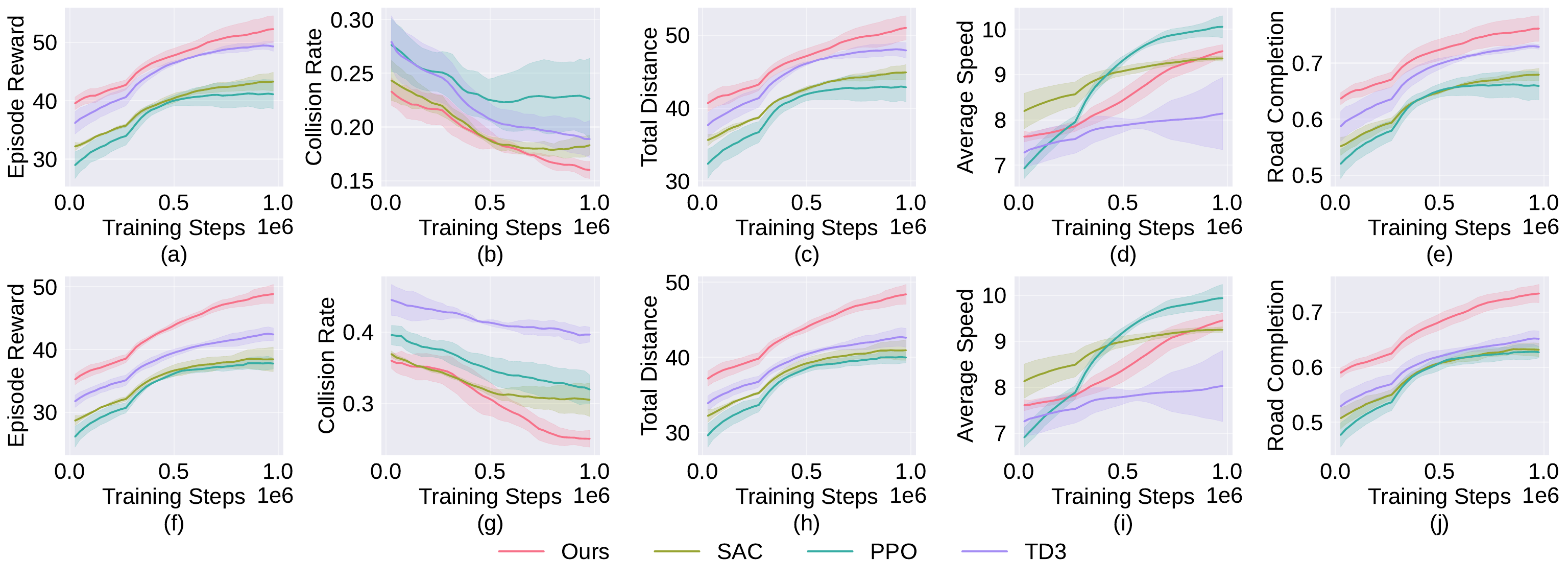}}
  \caption{Performance comparison with traditional RL baselines in both regular and safety-critical test scenarios.}
  \label{exp-RL}
\end{figure}

First, we analyze the performance comparison between CurricuVLM and traditional RL methods (SAC, PPO, and TD3) as shown in Fig.~\ref{exp-RL} and Tables~\ref{tab1}-\ref{tab2}. The top row in Fig.~\ref{exp-RL}(a)-(e) demonstrates that in regular driving scenarios, our method consistently outperforms all traditional RL baselines across key metrics. Specifically, while TD3 achieves the best performance among traditional RL methods with an episode reward of 49.3 and a road completion rate of 72.9\%, CurricuVLM further improves these metrics to 52.3 and 76.2\%, respectively. More importantly, in terms of safety metrics, our method reduces the crash rate to 16.0\% compared to SAC's 18.3\%, while maintaining a competitive average speed of 9.52 m/s.

The advantages of CurricuVLM become more pronounced in safety-critical scenarios, as shown in Fig.~\ref{exp-RL}(f)-(j). When traditional RL methods face challenging scenarios, their performance significantly degrades. As shown in Table~\ref{tab2}, the crash rates increase to 30.5\% for SAC, 32.0\% for PPO, and 39.7\% for TD3. In contrast, CurricuVLM maintains a substantially lower crash rate of 25.1\%. Additionally, our method achieves higher episode rewards of 48.9 compared to TD3's 42.4, along with superior road completion rates of 73.4\% compared to 65.2\% achieved by TD3. These results suggest that our approach effectively prepares the agent for handling complex driving situations without compromising basic driving capabilities.
Furthermore, CurricuVLM demonstrates superior learning effectiveness, with a failure-to-success rate of 39.1\% and a success-to-success rate of 73.5\%. These results significantly outperform TD3, which exhibits a failure-to-success rate of 28.6\% and a success-to-success rate of 64.3\%.
This indicates our method effectively learns from failed scenarios while maintaining stable performance across successful ones.

\begin{figure}[!t]
  \centerline{\includegraphics[width=0.99\textwidth]{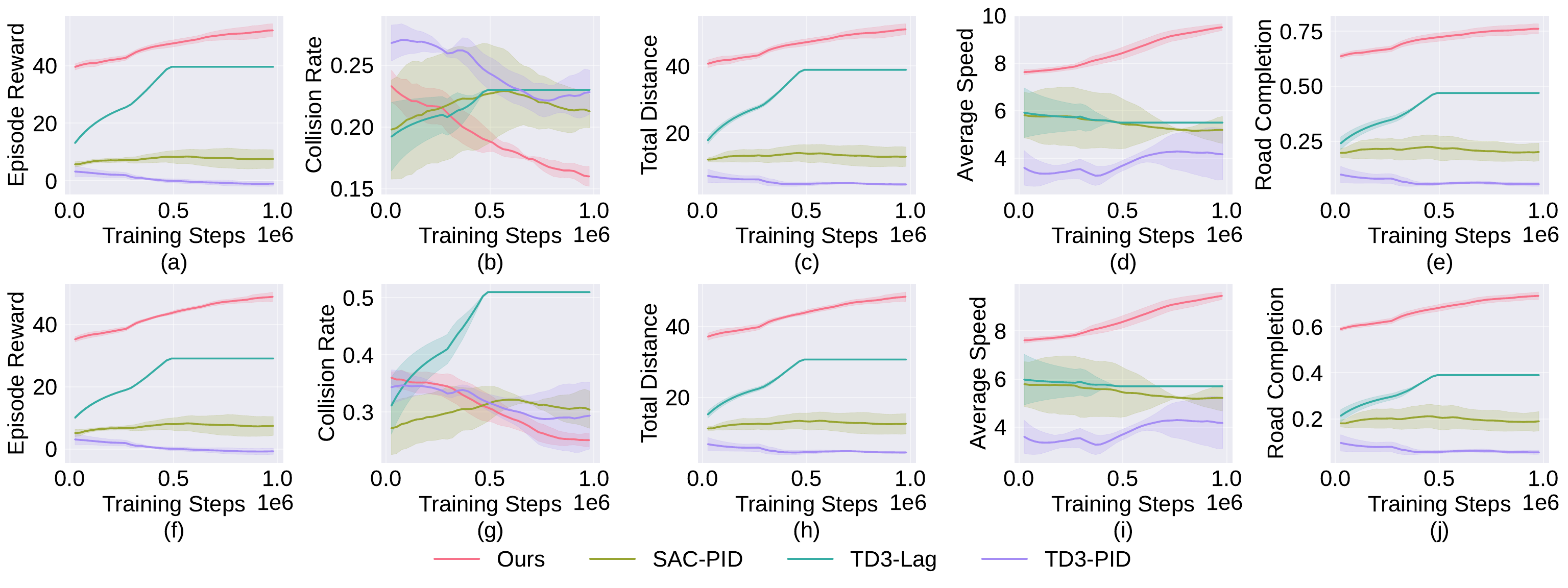}}
  \caption{Performance comparison with safe RL baselines in both regular and safety-critical test scenarios.}
  \label{exp-SafeRL}
\end{figure}

\subsubsection{Comparison with Safe RL Methods}

Next, we compare CurricuVLM with safe RL methods including SAC-PID, TD3-Lag, and TD3-PID. As shown in the top row of Fig.~\ref{exp-SafeRL}, we observe that safe RL methods exhibit overly conservative behavior in regular driving scenarios. According to Table~\ref{tab1}, these methods achieve limited episode rewards of 7.57, 39.6, and -1.0, respectively, along with low road completion rates ranging from 6.48\% to 46.9\%. Although they maintain relatively low crash rates comparable to CurricuVLM, this safety comes at a substantial cost to driving efficiency, as reflected in their low average speeds below 5.5 m/s and limited total distances traveled.

The limitations of safe RL methods become particularly evident when safety-critical scenarios are introduced. As shown in Table~\ref{tab2}, their episode rewards further drop below 30. More importantly, these methods fail to maintain safety despite their conservative nature, with crash rates increasing substantially to 30.4\% for SAC-PID and 51.0\% for TD3-Lag. This degradation can be attributed to their training process, which primarily focuses on maintaining safety margins in regular driving conditions without exposure to safety-critical scenarios. This is particularly evident in their consistently poor learning effectiveness, with all failure-to-success rates below 8.5\% in safety-critical scenarios, while CurricuVLM maintains a robust 39.1\%.
In contrast, CurricuVLM achieves superior performance through its personalized safety-critical curriculum learning approach. By systematically exposing the agent to increasingly challenging scenarios based on its current capabilities, our method develops robust driving policies that maintain both safety and efficiency. 
The smooth and consistent improvement shown in the learning curves demonstrates that our curriculum effectively bridges the gap between regular and safety-critical driving conditions, addressing a fundamental limitation of safe RL approaches that lack exposure to diverse safety-critical scenarios during training.

\begin{figure}[!t]
  \centerline{\includegraphics[width=0.99\textwidth]{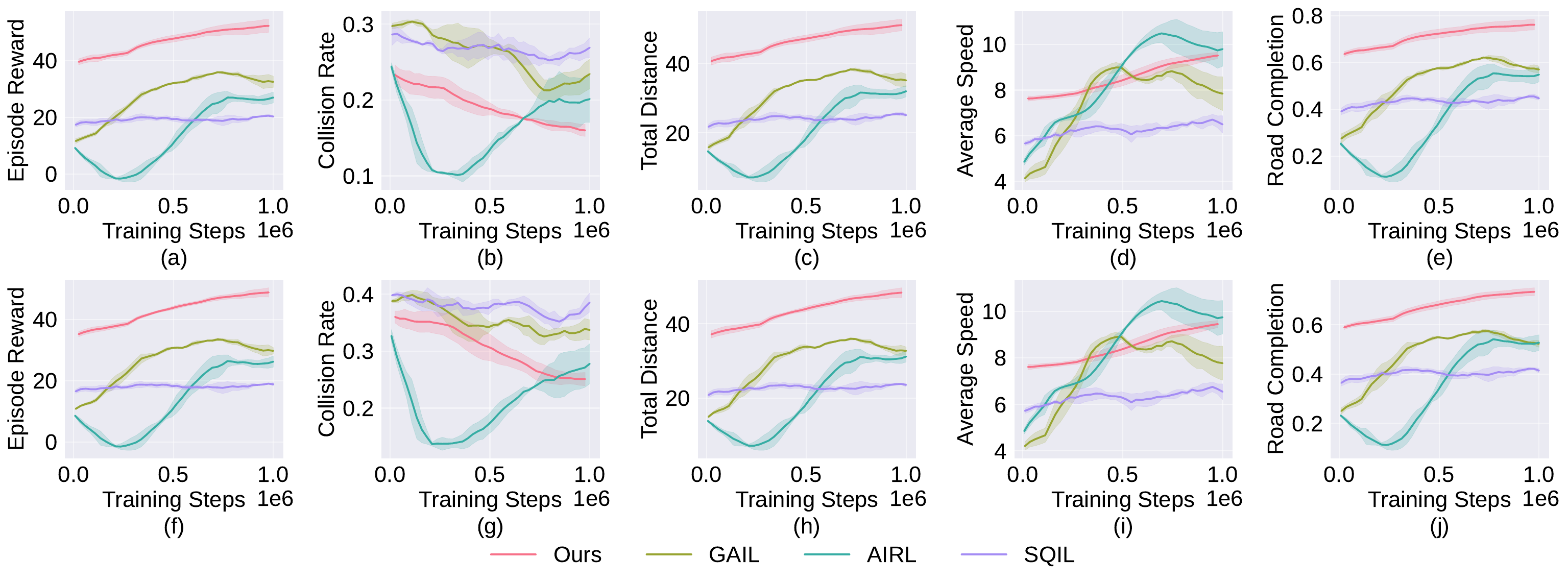}}
  \caption{Performance comparison with imitation learning baselines in both regular and safety-critical test scenarios.}
  \label{exp-Imitation}
\end{figure}

\subsubsection{Comparison with Imitation Learning Methods}

We further compare CurricuVLM with imitation learning methods including GAIL, AIRL, SQIL, and BC, as shown in Fig.~\ref{exp-Imitation} and Fig.~\ref{exp-BC}. For training these imitation learning algorithms, we collected 28K expert trajectories with 100\% success rate from the training scenarios. In Fig.~\ref{exp-Imitation}(a)-(e), we observe that imitation learning methods achieve moderate performance in regular driving scenarios. According to Table~\ref{tab1}, these methods obtain episode rewards ranging from 20.3 to 32.5 and road completion rates between 44.7\% and 58.6\%. 
The fundamental limitations of imitation learning approaches manifest more clearly in safety-critical scenarios. As demonstrated in Table~\ref{tab2}, their performance deteriorates significantly, with episode rewards dropping to between 18.9 and 29.8, and crash rates increasing to range from 27.8\% to 38.5\%. This degradation occurs because these methods struggle to generalize beyond the distribution of expert demonstrations, which only contain successful trajectories in regular driving conditions. 

\begin{figure}[!t]
  \centerline{\includegraphics[width=0.99\textwidth]{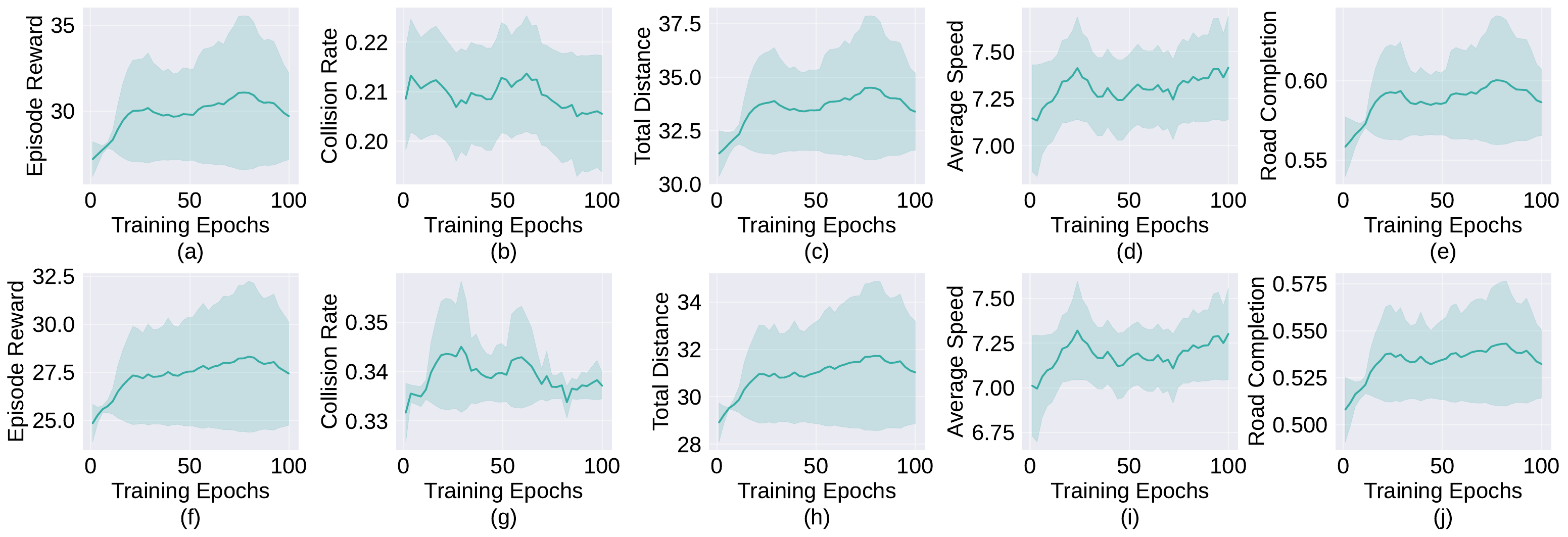}}
  \caption{Performance comparison with BC in both regular and safety-critical test scenarios.}
  \label{exp-BC}
\end{figure}

Due to the different training paradigm, BC is trained for 100 epochs and its results are shown separately in Fig.~\ref{exp-BC}. Despite having access to high-quality expert demonstrations, BC shows modest performance with an episode reward of 29.7 and road completion rate of 58.6\% in regular scenarios, which further decline to 27.4 and 53.2\% respectively in safety-critical scenarios. The learning curves in Fig.~\ref{exp-BC} reveal that BC quickly converges after the initial training phase, suggesting limited generalization capability beyond the demonstrated behaviors. In contrast, CurricuVLM demonstrates superior performance and adaptability through its curriculum learning approach. Rather than solely relying on expert demonstrations, our method actively explores and learns from diverse scenarios while maintaining safety awareness. This enables CurricuVLM to achieve substantially higher episode rewards and road completion rates while maintaining lower crash rates in both regular and safety-critical scenarios.

\begin{figure}
  \centerline{\includegraphics[width=0.99\textwidth]{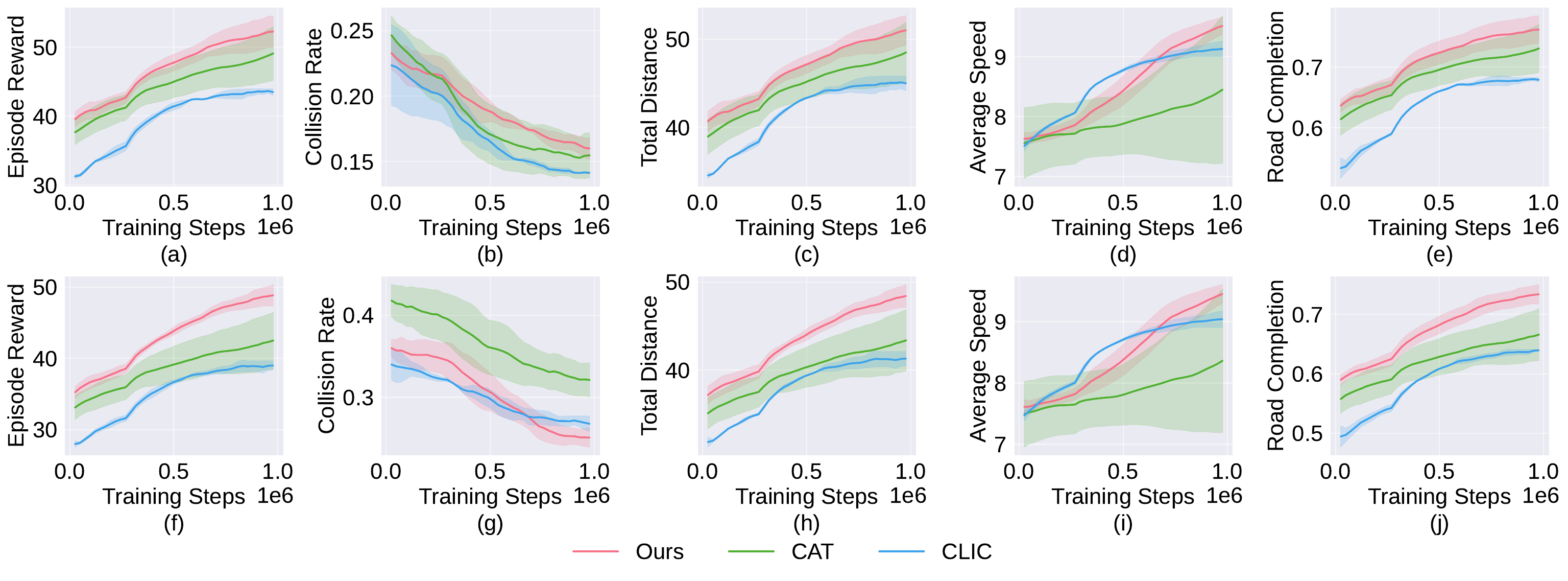}}
  \caption{Performance comparison with closed-loop curriculum learning methods in both regular and safety-critical test scenarios.}
  \label{exp-CloseLoop}
\end{figure}

\subsubsection{Comparison with Closed-loop Curriculum Learning Methods}

Finally, we compare CurricuVLM with state-of-the-art closed-loop curriculum learning methods, CAT and CLIC, as shown in Fig.~\ref{exp-CloseLoop}. In regular driving scenarios, both baseline methods demonstrate competitive performance. Particularly, CAT achieves strong navigation capabilities with high road completion rates, while CLIC maintains the lowest crash rates. However, CurricuVLM still outperforms these methods across key metrics including episode rewards and road completion rates while maintaining comparable safety levels.
When tested in safety-critical scenarios, our method's advantages in handling complex driving situations become evident. Both CAT and CLIC experience notable performance degradation, especially in terms of safety metrics and road completion rates. This suggests that their curriculum generation strategies, while effective for regular driving conditions, may not adequately prepare the agent for more challenging scenarios. In contrast, CurricuVLM maintains robust performance across all metrics, demonstrating its superior ability to handle safety-critical situations.
Moreover, CurricuVLM achieves higher failure-to-success and success-to-success rates in safety-critical scenarios compared to CAT and CLIC. These results validate the effectiveness of our VLM-based approach in identifying and addressing specific performance bottlenecks through personalized curriculum generation, leading to more robust and adaptable driving policies.

\begin{table}
\begin{small}
  \caption{Performance comparison with ablation variant models in the \textit{safety-critical test scenarios}. Mean and standard deviation over 3 seeds. The best results are marked in \textbf{bold}.}
  \label{tab3}
  \centering
  \begin{center}
  \renewcommand{\arraystretch}{1.5}
  \resizebox{\textwidth}{!}{
  \begin{tabular}{lccccccc}
  \toprule
  % \multirow{2}{*}{Category} & 
  \multirow{2}{*}{Experiment} &  \multirow{2}{*}{\shortstack{Episode\\Reward}} \multirow{2}{*}{$\uparrow$} & \multirow{2}{*}{\shortstack{Road\\Completion (\%)}} \multirow{2}{*}{$\uparrow$} & \multirow{2}{*}{\shortstack{Total\\Distance}} \multirow{2}{*}{$\uparrow$} & \multirow{2}{*}{\shortstack{Crash\\Rate (\%)}} \multirow{2}{*}{$\downarrow$} & \multirow{2}{*}{\shortstack{Average\\Speed}} \multirow{2}{*}{$\uparrow$} & \multirow{2}{*}{\shortstack{Failure-to-\\Success Rate (\%)}} \multirow{2}{*}{$\uparrow$} & \multirow{2}{*}{\shortstack{Success-to-\\Success Rate (\%)}} \multirow{2}{*}{$\uparrow$} \\
  & & & & & & & \\
  \midrule
  
   W/o Curriculum & 42.4 {\tiny $\pm$ 1.01} & 65.2 {\tiny $\pm$ 1.40} & 42.6 {\tiny $\pm$ 1.26} & 39.7 {\tiny $\pm$ 1.04} & 8.02 {\tiny $\pm$ 0.77}   & 28.6 {\tiny $\pm$ 2.79} & 64.3 {\tiny $\pm$ 21.4} \\
   W/o Online Generation  & 42.4 {\tiny $\pm$ 2.37} & 66.8 {\tiny $\pm$ 4.20} & 43.4 {\tiny $\pm$ 2.00} & 31.5 {\tiny $\pm$ 3.45} & 8.70 {\tiny $\pm$ 0.80} & 33.5 {\tiny $\pm$ 2.38} & 47.5 {\tiny $\pm$ 22.5}  \\
   W/o Learned Prior  & 38.3 {\tiny $\pm$ 4.18} & 60.8 {\tiny $\pm$ 5.04} & 40.2 {\tiny $\pm$ 3.81} & 29.4 {\tiny $\pm$ 3.04} & 8.00 {\tiny $\pm$ 1.37} & 25.4 {\tiny $\pm$ 7.50} & 41.7 {\tiny $\pm$ 27.9}  \\
   W/o VLM Analysis &  43.2 {\tiny $\pm$ 0.32} & 67.6 {\tiny $\pm$ 0.67} & 44.1 {\tiny $\pm$ 0.32} & 32.6 {\tiny $\pm$ 0.50} & 8.79 {\tiny $\pm$ 0.71}  & 32.2 {\tiny $\pm$ 0.42} & 61.3 {\tiny $\pm$ 1.25}  \\
  \rowcolor{gray!15}
  CurricuVLM (full) & \textbf{48.9} {\tiny $\pm$ 1.53} & \textbf{73.4} {\tiny $\pm$ 1.66} & \textbf{48.4} {\tiny $\pm$ 1.31} & \textbf{25.1} {\tiny $\pm$ 1.17} & \textbf{9.45} {\tiny $\pm$ 0.16} & \textbf{39.1} {\tiny $\pm$ 0.66} & \textbf{73.5} {\tiny $\pm$ 21.1}   \\
  \bottomrule
  \end{tabular}
  }
  \end{center}
\end{small}
\end{table}

\subsection{Ablation Studies}

To systematically evaluate the contribution of each component in CurricuVLM, we conduct comprehensive ablation studies through four variant models and assess their performance in safety-critical testing scenarios. The variants are designed to examine the impact of our key technical components:
\begin{itemize}
 \item \textbf{W/o VLM Analysis}: Instead of using VLM-based behavioral analysis, this variant directly utilizes the object trajectories from the Waymo Open Motion Dataset to generate scenarios targeting high-probability collision points with the AV, removing the personalized aspect of scenario generation.

 \item \textbf{W/o Learned Prior}: This variant replaces our DenseTNT-based trajectory generation approach with a rule-based approach that creates simplified adversarial trajectories through waypoint selection and Bezier curve fitting, testing the importance of realistic trajectory modeling.

 \item \textbf{W/o Online Generation}: Rather than dynamically generating personalized scenarios based on the agent's evolving behavior patterns, this variant pre-generates a fixed set of safety-critical scenarios and injects them into training, eliminating the adaptive nature of our curriculum.

 \item \textbf{W/o Curriculum}: This variant removes the dynamic curriculum scheduling mechanism, training the agent without the structured exposure to safety-critical scenarios, thus testing the importance of our probabilistic scheduling approach.
\end{itemize}

The performance comparison of these variants against our full model is presented in Table \ref{tab3}. The ablation results demonstrate the significant contribution of each component in CurricuVLM. The W/o VLM Analysis variant shows substantial performance degradation, with the episode reward decreasing from 48.9 to 43.2 and the road completion rate falling from 73.4\% to 67.6\%. This degradation is particularly evident in the failure-to-success rate, which decreases by 6.9\%, highlighting VLM's crucial role in understanding and addressing agent failures. The W/o Learned Prior variant exhibits the poorest performance among all variants, with episode rewards dropping to 38.3 and crash rates increasing significantly. This result indicates the importance of learned trajectory generation in creating realistic and challenging scenarios. The W/o Online Generation variant shows limited adaptability with lower success-to-success rates of 47.5\%, indicating that pre-generated scenarios without dynamic adaptation fail to effectively address the agent's evolving behavioral patterns. This validates our design choice of maintaining a rolling buffer of agent behaviors. The W/o Curriculum variant demonstrates inferior performance across all metrics, particularly in safety-critical scenarios where the crash rate increases by 14.6\%. This substantial degradation validates our probabilistic curriculum scheduling mechanism in balancing regular training and safety-critical scenario exposure.

\subsection{Sensitivity Analysis}

\subsubsection{Impact of Vision-Language Model Selection}

\begin{figure}
  \centerline{\includegraphics[width=0.99\textwidth]{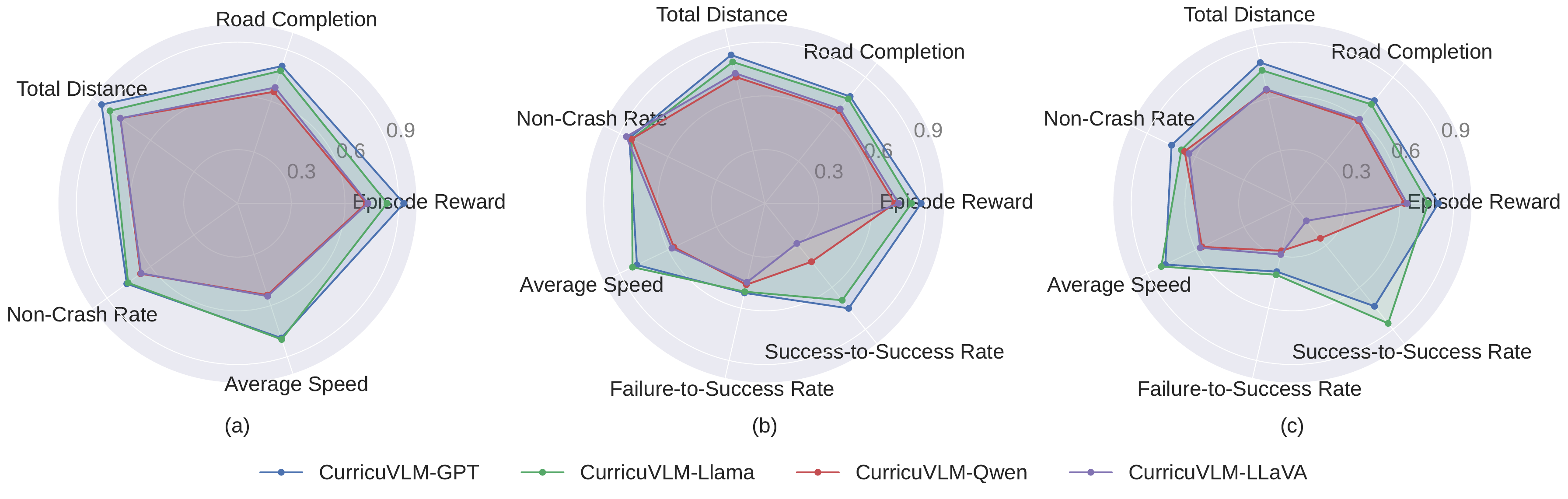}}
  \caption{Performance comparison with our CurricuVLM using different VLMs in: (a) training scenarios, (b) regular test scenarios, and (c) safety-critical test scenarios.}
  \label{exp-vlm-radarchart}
\end{figure}

The performance of CurricuVLM may be sensitive to the choice of the underlying Vision-Language Model. To quantify this sensitivity, we conducted experiments with four state-of-the-art VLMs: GPT-4o \citep{achiam2023gpt}, Llama-3.2-11B-Vision \citep{dubey2024llama}, Qwen2-VL-7B \citep{wang2024qwen2}, and LLaVA-1.5-13B \citep{liu2024visual}. Fig.~\ref{exp-vlm-radarchart} presents radar charts comparing their performance across multiple metrics, where all indicators are normalized to [0,1] with higher values indicating better performance. The results show that CurricuVLM-GPT and CurricuVLM-Llama exhibit comparable superior performance across all evaluation metrics, with their performance profiles closely matching each other. In contrast, CurricuVLM-Qwen and CurricuVLM-LLaVA show relatively lower performance across most metrics, particularly in episode rewards and road completion rates.

The varying performance across different VLMs can be attributed to several factors. CurricuVLM-GPT and CurricuVLM-Llama benefit from more sophisticated VLM architectures and extensive pre-training on diverse visual data, enabling a better understanding of complex driving scenarios. Their superior reasoning capabilities allow for more nuanced analysis of agent behavior and more effective curriculum generation. Interestingly, despite LLaVA-1.5-13B having a larger parameter count than Llama-3.2-11B, it shows comparatively lower performance. This suggests that model size alone may not be the primary factor in determining effectiveness, as the quality of pre-training data and the model's architectural design for processing driving-specific scenarios likely play significant roles. Another noteworthy observation is the consistency in performance patterns across different evaluation settings. While CurricuVLM-Qwen and CurricuVLM-LLaVA achieve reasonable non-crash rates in training scenarios, they show limitations in generalizing to new scenarios, as reflected in their lower success-to-success and failure-to-success rates. This indicates that these models may struggle with the complex task of analyzing driving behavior and generating appropriate curriculum adjustments, despite their fundamental ability to process visual information.

\subsubsection{Analysis of Batch Size for Safety-Critical Event Analysis}

\begin{figure}
  \centerline{\includegraphics[width=0.99\textwidth]{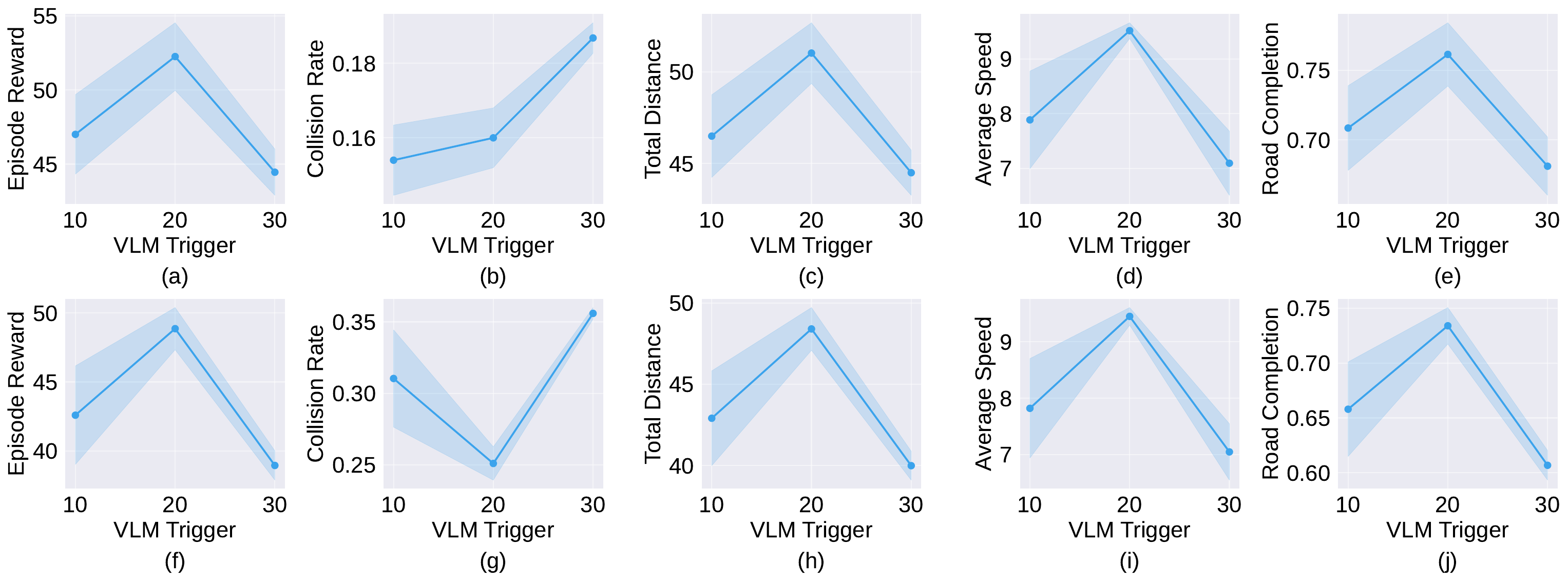}}
    \caption{Performance comparison under different batch sizes $N$ for safety-critical event analysis in both regular and safety-critical test scenarios.}
  \label{exp-trigger}
\end{figure}

We further investigate the impact of different batch sizes $N$ in our safety-critical event analysis process on the performance of CurricuVLM. As described in Section \ref{sec4.2}, the framework adopts a batch-wise analysis strategy by accumulating $N$ safety-critical events before initiating the GPT-4o analysis phase. Our experiments examine three different batch sizes ($N=10$, $20$, and $30$) to understand their effects on both regular and safety-critical scenario performance.

Fig.~\ref{exp-trigger} illustrates how different batch sizes affect various performance metrics. In regular driving scenarios, as shown in Fig.~\ref{exp-trigger}(a)-(e), we observe that a batch size of $N=20$ yields the best overall performance. With this setting, the model achieves the highest episode reward, optimal total distance traveled, and best road completion rate. While this setting shows a slight increase in collision rate compared to $N=10$, this marginal safety trade-off is compensated by significantly improved driving efficiency. This observation aligns with our framework's design principle of balancing safety and operational efficiency. The experimental results in safety-critical scenarios, as shown in Fig.~\ref{exp-trigger}(f)-(j), further validate the advantages of $N=20$. A smaller batch size of $N=10$ leads to overly frequent analyses and more conservative behavior, resulting in suboptimal driving efficiency. Conversely, a larger batch size of $N=30$ delays the behavioral pattern analysis, leading to compromised safety performance with elevated collision rates of 0.35.

These findings demonstrate that the batch size in safety-critical event analysis significantly influences the framework's performance. An insufficient batch size may lead to premature pattern recognition and cautious behavior, while an excessive batch size might miss crucial behavioral patterns and delay necessary curriculum adaptations. Our experiments indicate that $N=20$ achieves an optimal balance, allowing sufficient event accumulation for meaningful pattern recognition while ensuring timely curriculum updates.

\begin{figure}
  \centerline{\includegraphics[width=0.99\textwidth]{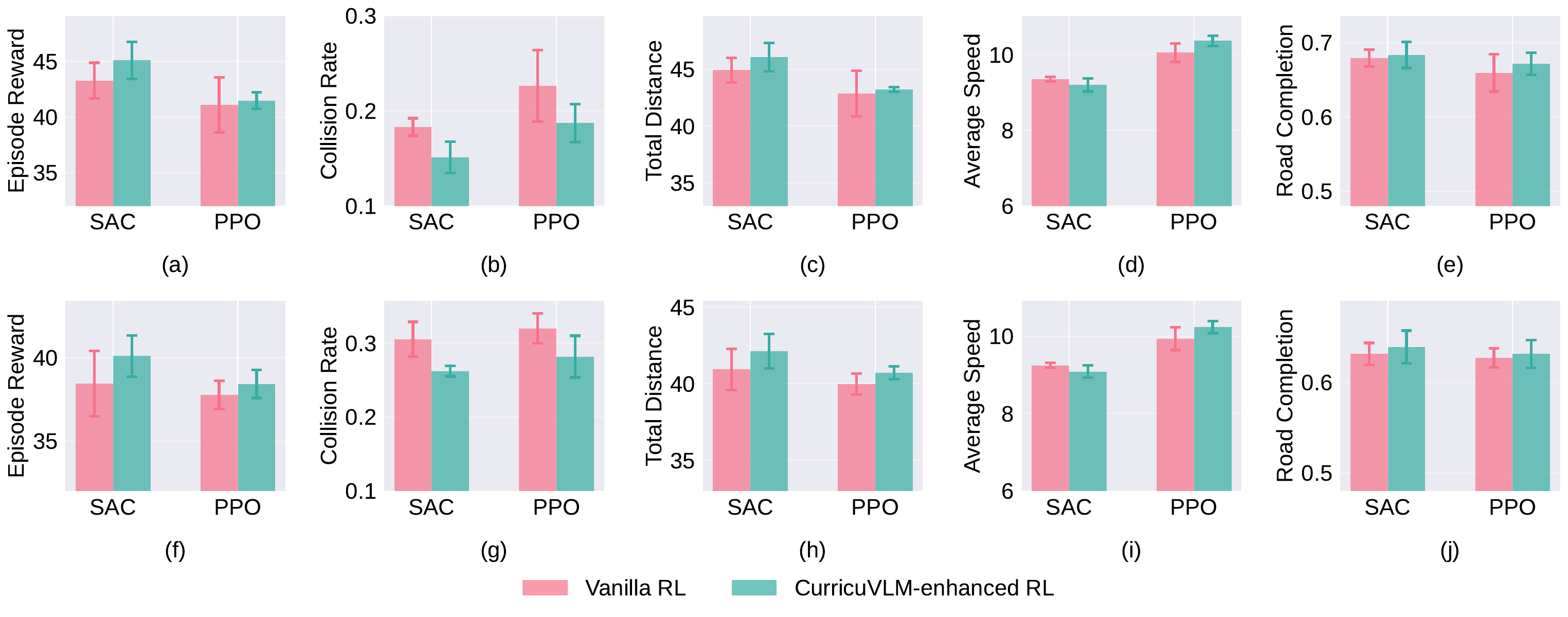}}
  \caption{Performance comparison with CurricuVLM using different RL algorithms in both regular and safety-critical test scenarios.}
  \label{exp-ppo-sac-bar}
\end{figure}

\subsection{Compatibility with Different RL Algorithms}\label{sec5.6}

To demonstrate the versatility of our proposed CurricuVLM framework, we evaluate its performance when integrated with different RL algorithms, specifically SAC and PPO, in addition to the default TD3 implementation. Fig.~\ref{exp-ppo-sac-bar} presents a comprehensive comparison between the vanilla versions of these algorithms and their CurricuVLM-enhanced counterparts across multiple performance metrics.

In regular driving scenarios, as shown in Fig.~\ref{exp-ppo-sac-bar}(a)-(e), CurricuVLM consistently improves the performance of both base algorithms. For SAC, our framework substantially improves the episode reward and total distance traveled while reducing the collision rate from 0.18 to 0.15. The enhanced SAC maintains a similar road completion rate, suggesting that the safety improvements do not come at the cost of task completion. When applied to PPO, CurricuVLM also achieves notable improvements in both episode reward and safety metrics, while the increased total distance traveled demonstrates enhanced driving efficiency.
In safety-critical scenarios, shown in Fig.~\ref{exp-ppo-sac-bar}(f)-(j), both enhanced algorithms maintain their performance advantages. The CurricuVLM-enhanced SAC demonstrates improved episode rewards and reduced collision rates while achieving longer travel distances. Similarly, the enhanced PPO exhibits better performance across all metrics, particularly in safety-related measurements. These results indicate that our framework effectively enhances the robustness of both algorithms in challenging driving conditions.

The consistent performance improvements across different RL algorithms suggest that our VLM-based curriculum learning framework provides a generalizable approach to enhancing autonomous driving systems. Rather than being algorithm-specific, the benefits of personalized scenario analysis and adaptive curriculum generation appear to be fundamental improvements that can complement various underlying RL methodologies. This compatibility highlights the potential of our approach as a general framework for improving the safety and efficiency of autonomous driving systems.

\begin{figure}
  \centerline{\includegraphics[width=0.99\textwidth]{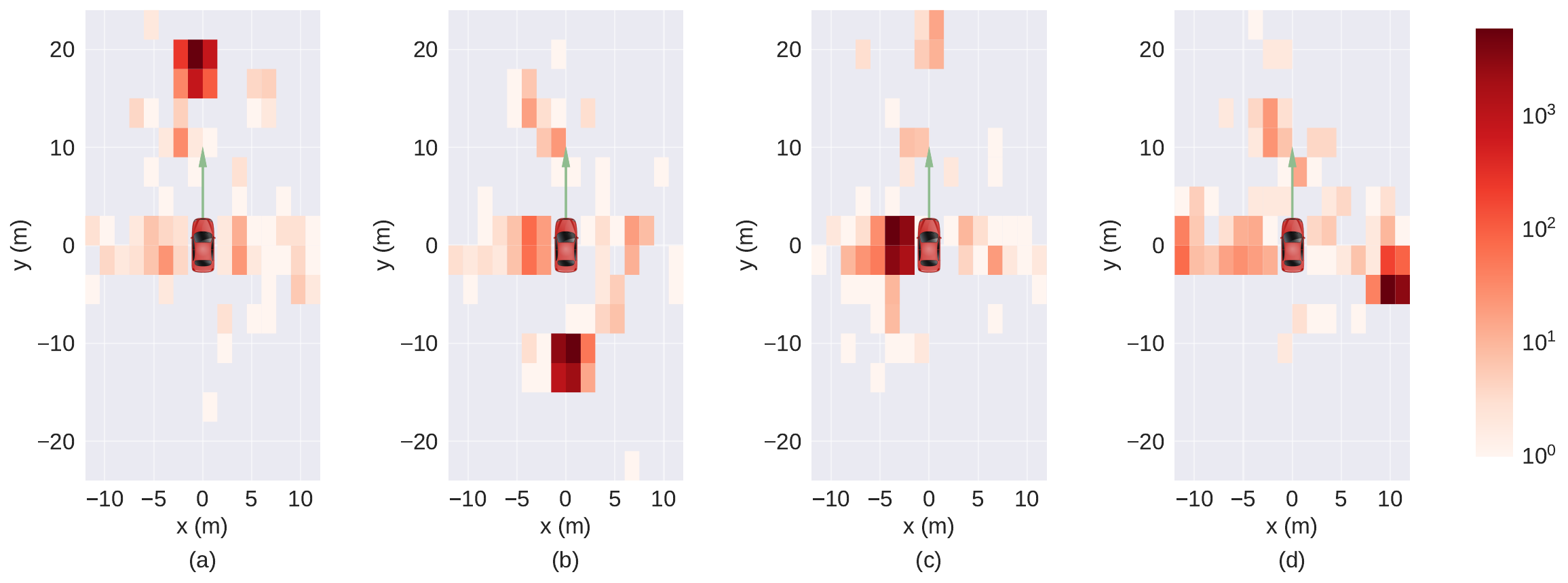}}
\caption{Spatial distribution of selected BV positions for the AV agent with different disabled perception regions: (a) forward, (b) backward, (c) left, and (d) right blind spots. The red car represents the ego vehicle with the green arrow indicating its heading direction.}
  \label{fig_heatmap}
\end{figure}

\subsection{Analysis of Personalized Curriculum Generation}

To validate the effectiveness of CurricuVLM in generating personalized safety-critical scenarios that align with specific agent weaknesses, we conduct controlled experiments by introducing various perceptual limitations to the AV agent. By deliberately modifying the agent's LiDAR observations in different directions (forward, backward, left, and right), we create agents with known behavioral vulnerabilities that should be detected by our VLM-based analysis framework and addressed through curriculum generation. Fig.~\ref{fig_heatmap} visualizes the spatial distribution of the BV selected through Eq.~\eqref{eq8}. The ego vehicle is positioned at the origin (0, 0) and oriented upward, as indicated by the green arrow. The intensity of red coloring represents the logarithmic frequency of selected vehicle positions relative to the ego vehicle, with darker shades indicating higher frequencies.

The results reveal that our framework successfully aligns generated scenarios with agent-specific weaknesses. When analyzing agents with disabled forward perception, our CurricuVLM predominantly generates insights focused on forward regions. As shown in Fig.~\ref{fig_heatmap}(a), this translates into a high frequency of selected BV positions in front of the ego vehicle, indicated by the dark red regions at $y > 0$. The framework demonstrates similar alignment patterns for agents with other disabled sensing regions. Fig.~\ref{fig_heatmap}(b) shows a concentration of BV selections behind the ego vehicle when rear perception is disabled. For agents with disabled left and right perception, Fig.~\ref{fig_heatmap}(c) and (d) exhibit a clear pattern of high-frequency selections in the corresponding regions.

These experimental results validate CurricuVLM's ability to adapt to agent-specific limitations through personalized curriculum generation. The clear spatial correlation between disabled perceptual regions and BV position selections demonstrates that our framework successfully translates VLM-generated insights into meaningful training scenarios. This personalized curriculum generation capability marks a significant advancement over conventional approaches that rely on fixed scenario distributions or random sampling, enabling more efficient and focused improvement of autonomous driving capabilities.

\begin{figure}
  \centerline{\includegraphics[width=0.99\textwidth]{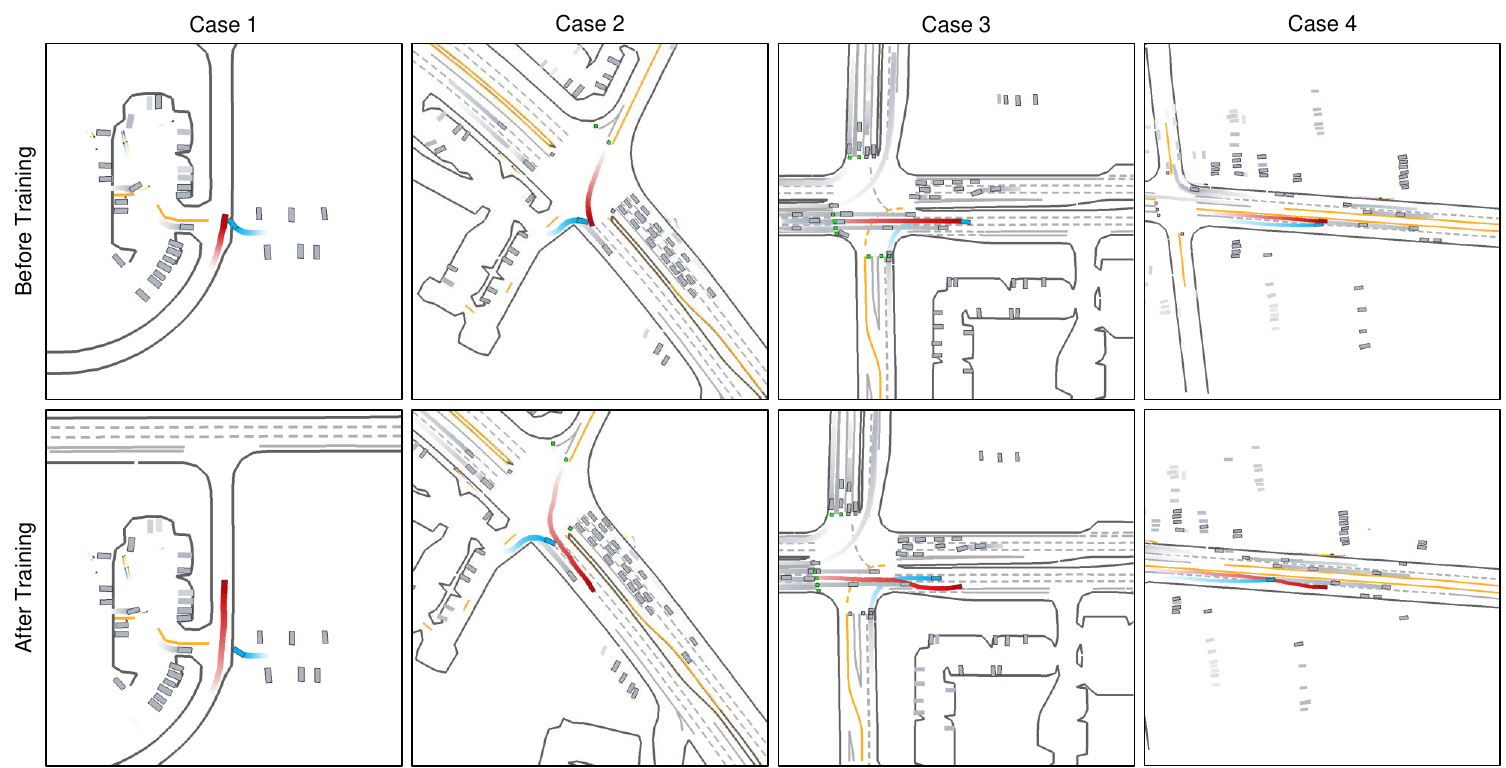}}
  \caption{Visualization of RL agent behavior before and after training with CurricuVLM in representative safety-critical scenarios. Red trajectories represent our AV agent, and blue trajectories indicate vehicles involved in potential collisions.}
  \label{cases}
\end{figure}

\subsection{Qualitative Analysis of Performance Improvement}

To provide deeper insights into how CurricuVLM improves driving behavior, we analyze four representative scenarios where the agent initially failed but successfully learned safe navigation strategies after training, as shown in Fig.~\ref{cases}. These cases demonstrate the agent's acquisition of sophisticated driving skills across diverse traffic situations.

The first two cases present the agent's improved handling of intersection-related challenges. 
In Case 1, when encountering an unprotected right-turning vehicle, the untrained agent fails to maintain a safe distance and collides with the turning vehicle. After training with CurricuVLM, the agent learns to proactively adjust its speed and lateral position, creating a larger safety margin while maintaining efficient progress. Similarly, in Case 2, the agent develops more sophisticated merging behavior, learning to select safer lane positions and adjust its speed to avoid potential conflicts with surrounding vehicles.

The method also demonstrates enhanced capabilities in handling dynamic traffic situations. Case 3 showcases the agent's improved response to sudden speed changes from leading vehicles. While the untrained agent fails to react appropriately to a decelerating vehicle, the trained agent successfully executes a safe lane change maneuver to maintain smooth traffic flow. This behavioral improvement extends to Case 4, where the agent learns to anticipate and respond to potential lane change conflicts. When confronted with an adjacent vehicle's lane change, the trained agent demonstrates sophisticated decision-making by accelerating and shifting to a safer lane while maintaining appropriate following distances.
These qualitative results indicate that CurricuVLM enables the agent to develop sophisticated driving behaviors beyond simple collision avoidance. 

\section{Conclusions}\label{sec6}

In this paper, we presented CurricuVLM, a novel framework that leverages Vision-Language Models to enable personalized curriculum learning for autonomous driving agents. Our approach addresses a critical challenge in autonomous driving: effectively incorporating safety-critical scenarios into policy learning while adapting to an agent's evolving capabilities. Through the innovative combination of VLMs' visual understanding capabilities and GPT-4o's systematic behavioral analysis, our framework can comprehensively analyze unsafe driving situations, identify critical behavioral patterns, and generate tailored training scenarios. The dynamic curriculum scheduling mechanism ensures these scenarios are integrated effectively into the training process, creating a closed-loop learning system that continuously adapts to the agent's evolving capabilities. Extensive experiments on the Waymo Open Motion Dataset demonstrate that CurricuVLM achieves superior performance across both regular and safety-critical scenarios compared to various state-of-the-art baselines. Furthermore, our experiments with different RL algorithms show that CurricuVLM serves as a general approach for enhancing autonomous driving systems across different learning paradigms.

While our current results are promising, several exciting directions remain for future research. First, our current implementation utilizes general-purpose VLMs for scene understanding and behavioral analysis. Fine-tuning these models on autonomous driving-specific data could potentially improve their understanding of traffic patterns, vehicle interactions, and safety-critical situations, leading to more precise and relevant behavioral insights. Second, incorporating Retrieval-Augmented Generation (RAG) techniques could enable our framework to reference traffic rules, driving regulations, and expert knowledge during behavioral analysis. This would allow the system to provide recommendations that are not only based on observed patterns but also grounded in established driving principles and safety guidelines. Third, developing methods that can directly generate realistic and controllable scenarios from VLM's natural language recommendations could streamline the curriculum adaptation process and potentially capture more nuanced aspects of the behavioral insights. This could involve exploring generative models that can bridge the gap between natural language descriptions and executable driving scenarios.

\section*{Acknowledgment}
This work was supported by the University of Wisconsin-Madison's Center for Connected and Automated Transportation (CCAT), a part of the larger CCAT consortium, a USDOT Region 5 University Transportation Center funded by the U.S. Department of Transportation, Award \#69A3552348305. The contents of this paper reflect the views of the authors, who are responsible for the facts and the accuracy of the data presented herein, and do not necessarily reflect the official views or policies of the sponsoring organization.

\appendix

\section{Detailed Algorithmic Procedures of CurricuVLM}\label{appendix:algorithms}

\begin{algorithm}[H]
\caption{Generate Personalized Safety-Critical Scenario}
\label{alg:generate}
\begin{algorithmic}[1]
\Require Current insights $\mathcal{I}$, regular scenario $\xi_{\text{reg}}$, behavior buffer $\mathcal{B}$, trajectory prior model $f_{\text{prior}}$
\State Initialize best score $s^* \leftarrow -\infty$ 
\State Initialize best trajectory $Y^{\text{BV}*} \leftarrow \texttt{None}$
\For{each background vehicle in $\xi_{\text{reg}}$} \Comment{Iterate through all BVs}
    \State Extract historical information $X \leftarrow (M, S^{\text{AV}}_{1:t}, S^{\text{BV}}_{1:t})$
    \State Generate trajectory predictions and probabilities $\{(Y^{\text{BV}}_i, p^{\text{BV}}_i)\}_{i=1}^K \leftarrow f_{\text{prior}}(X)$
    \For{$i = 1$ to $K$}
        \State Initialize scenario score $s_i \leftarrow 0$
        \For{$(s_1^j, a_1^j, p_1^j, ..., s_T^j, a_T^j, p_T^j)$ in $\mathcal{B}$}
            \For{$t = 1$ to $T$}
                \State Get BV position $pos_t^{\text{BV}}$ and bounding box $bbox_t^{\text{BV}}$ from $Y^{\text{BV}}_i$
                \State Get AV position $pos_t^{\text{AV}}$ and bounding box $bbox_t^{\text{AV}}$ from $(s_t^j, a_t^j)$
                \State Compute relative position $pos_t^{\text{rel}} \leftarrow \texttt{GetRelativePosition}(pos_t^{\text{BV}}, pos_t^{\text{AV}})$
                \If{$bbox_t^{\text{BV}} \cap bbox_t^{\text{AV}} \neq \emptyset$ \textbf{and} $pos_t^{\text{rel}} \in \mathcal{I}.\texttt{CriticalArea}$}
                \Comment{Eq.~(\ref{eq9})}
                    \State $s_i \leftarrow s_i + p^{\text{BV}}_i \cdot p_t^j \cdot \lambda^t$
                    \Comment{Eq.~(\ref{eq10})}
                \EndIf
            \EndFor
        \EndFor
        \If{$s_i > s^*$}
            \State $s^* \leftarrow s_i$
            \State $Y^{\text{BV}*} \leftarrow Y^{\text{BV}}_i$ \Comment{Eq.~(\ref{eq11})}
        \EndIf
    \EndFor
\EndFor
\State Construct safety-critical scenario $\xi_{\text{safe}}$ using $Y^{\text{BV}*}$ \Comment{Generate final scenario}
\State \Return $\xi_{\text{safe}}$
\end{algorithmic}
\end{algorithm}

\begin{algorithm}[H]
\caption{CurricuVLM Training Process}
\label{alg:curricuVLM}
\begin{algorithmic}[1]
\Require Initial policy $\pi_0$, VLM model $f_{\text{VLM}}$, GPT-4o model $g_{\text{GPT-4o}}$, prompt template $P(\cdot)$, task description $l$, batch size $N$, buffer size $L$, total steps $T$, regular scenarios pool $\mathcal{S} = \{\xi_1, \xi_2, ..., \xi_M\}$
\State Initialize empty event description buffer $\mathcal{D} \leftarrow \{\}$
\State Initialize behavior buffers $\{\mathcal{B}_m\}_{m=1}^M \leftarrow \{\emptyset\}^M$
\Comment{Store recent agent behaviors per scenario}
\State Initialize VLM insights $\mathcal{I} \leftarrow \texttt{None}$
\State Initialize curriculum $\mathcal{C} \leftarrow \{\}$
\State Initialize training step $t \leftarrow 1$
\While{$t \leq T$}
    \State Sample regular scenario $\xi_{\text{reg}}$ from $\mathcal{S}$ with index $m$
    \State Sample $p \sim \mathcal{U}(0,1)$
    \If{$p < \min(\kappa \cdot \frac{t}{T} \cdot p_{\text{max}}, p_{\text{max}})$ and $\mathcal{I} \neq \texttt{None}$}
        \State Generate personalized safety-critical scenario $\xi_{\text{safe}}$ based on Alg.~\ref{alg:generate} \Comment{Sec.~\ref{sec4.3}}
        \State Update curriculum $\mathcal{C} \leftarrow \mathcal{C} \cup \{\xi_{\text{safe}}\}$
        \State Execute $\xi_{\text{safe}}$ and collect trajectory $\tau$
    \Else
        \State Update curriculum $\mathcal{C} \leftarrow \mathcal{C} \cup \{\xi_{\text{reg}}\}$
        \State Execute $\xi_{\text{reg}}$ and collect trajectory $\tau$
    \EndIf
    \State Update behavior buffer $\mathcal{B}_m$ from $\tau$
    \If{$|\mathcal{B}_m| > L$}
        \State Remove oldest trajectory from $\mathcal{B}_m$
    \EndIf
    \If{safety-critical event occurs in $\tau$} \Comment{Sec.~\ref{sec4.2}}
        \State Generate description $d \leftarrow f_{\text{VLM}}(l, \mathcal{O}^{\leq k})$ \Comment{Eq.~(\ref{eq3})}
        \State Add to buffer $\mathcal{D} \leftarrow \mathcal{D} \cup \{d\}$
        \If{$|\mathcal{D}| = N$}
            \State Update insights $\mathcal{I} \leftarrow g_{\text{GPT-4o}}(P(\mathcal{D}))$ \Comment{Eq.~(\ref{eq4})}
            \State Clear buffer $\mathcal{D} \leftarrow \{\}$
        \EndIf
    \EndIf
    \State Update policy $\pi_t$ using standard RL update
    \State $t \leftarrow t + |\tau|$ \Comment{Add length of collected trajectory}
\EndWhile
\State \Return Final policy $\pi_T$
\end{algorithmic}
\end{algorithm}

\section{Theoretical Analysis}\label{theory}

In this appendix, we provide theoretical analysis for our CurricuVLM framework. Specifically, we analyze the convergence properties of our approach and show that, with sufficient training, the learned policy can achieve near-optimal performance across both regular and safety-critical scenarios. The key insight is that our VLM-based behavioral analysis combined with dynamic curriculum scheduling ensures comprehensive coverage of critical learning scenarios while maintaining basic driving competencies.

\begin{theorem}\label{thm:curric_theorem}
Suppose the VLM-based analysis accurately provides a consistent and unbiased estimate of the agent’s performance deficiencies across both regular and safety-critical scenarios. Let $\pi^*$ be the optimal policy that solves all tasks in the curriculum $\mathcal{C}$, and let $\hat{\pi}_t$ be the policy learned at iteration $t$ under our CurricuVLM framework. Then, as $t \to \infty$, $\hat{\pi}_t$ converges to $\pi^*$ with high probability. In particular, the dynamic scheduling mechanism ensures that the agent progressively improves performance on both regular and safety-critical scenarios, ultimately converging to a policy that minimizes safety violations while maintaining driving efficiency.
\end{theorem}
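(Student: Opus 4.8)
The plan is to cast the argument as a stochastic-approximation / curriculum-coverage result in three stages: (i) establish that every task in $\mathcal{C}$ is visited infinitely often with the right asymptotic frequency, (ii) invoke a standard single-task RL convergence guarantee on each task conditional on that coverage, and (iii) glue the per-task guarantees into a statement about the joint objective. For stage (i), I would analyze the adaptive triggering probability $p_t = \min(\kappa \tfrac{t}{T} p_{\max}, p_{\max})$: since $p_t$ is bounded below by a positive constant for all $t$ past a finite burn-in, the Borel--Cantelli lemma (second, for independent Bernoulli draws) gives that both regular and safety-critical episodes are sampled infinitely often; moreover the regular-scenario pool is sampled uniformly, so each of the $M$ base scenarios is visited infinitely often, and by the scenario-generation procedure of Section~\ref{sec4.3} the induced distribution over personalized safety-critical scenarios has full support on the agent's current deficiency set. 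The unbiasedness/consistency hypothesis on the VLM analyzer $g_{\text{GPT-4o}}\circ f_{\text{VLM}}$ is what guarantees that the generated scenarios asymptotically cover exactly the tasks on which $\hat\pi_t$ is suboptimal, so no deficient region is starved of training data.

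For stage (ii), I would treat the learning dynamics as a stochastic-approximation recursion on the policy (or value) parameters with a Robbins--Monro step-size schedule, where the per-iteration update direction is an unbiased estimate of the gradient of the curriculum-weighted objective $\mathbb{E}_{\xi\sim\mathcal{C}(\pi, f_{\text{VLM}}, g_{\text{GPT-4o}})}[\sum_t \gamma^t R(s_t,\pi(s_t))]$. Because the curriculum weights change with $t$ only through the slowly-varying schedule $p_t$ and the analyzer output, I would argue a two-timescale separation: the scenario distribution evolves on a slow timescale while the RL inner loop tracks the optimal policy for the instantaneous distribution on the fast timescale. Combining the coverage result from stage (i) with a standard ODE-method or contraction argument (e.g.\ the $\gamma$-contraction of the Bellman operator, or monotone improvement of policy-iteration-type updates) yields $\hat\pi_t \to \pi^*_{\mathcal{C}}$ where $\pi^*_{\mathcal{C}}$ is optimal for the limiting curriculum distribution. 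Finally, stage (iii): since the limiting distribution has support on all of $\mathcal{C}$ (both regular and safety-critical tasks, by stage (i)), $\pi^*_{\mathcal{C}}$ coincides with $\pi^*$ as defined in the theorem, and I would quantify the ``high probability'' claim by concentration of the empirical scenario frequencies (Hoeffding/Azuma on the Bernoulli triggering process), giving a $1-\delta$ bound that tightens as $t\to\infty$.

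The main obstacle is the non-stationarity introduced by the feedback loop: the curriculum $\mathcal{C}(\hat\pi_t, f_{\text{VLM}}, g_{\text{GPT-4o}})$ depends on the very policy being learned, so the ``target'' $\pi^*$ and the sampling distribution co-evolve, which breaks the fixed-distribution assumption underlying off-the-shelf RL convergence theorems. I expect to handle this by the two-timescale / quasi-static argument sketched above, formalized via the assumption that the analyzer is \emph{consistent and unbiased} (stated in the theorem) — this is precisely the hypothesis that lets me treat the induced target as a contraction toward a \emph{single} fixed point $\pi^*$ rather than a moving one, ruling out limit cycles in curriculum space. A secondary technical point worth care is that TD3 (and PPO/SAC) are not known to converge globally for general function approximation; I would therefore either state the result under a tabular/linear-realizability caveat, or phrase convergence as ``to a stationary point of the curriculum-weighted objective that is optimal within the representable policy class,'' and note that under the idealized assumptions this stationary point is $\pi^*$. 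The remaining steps — Borel--Cantelli for infinite visitation, Azuma for the high-probability frequency bound, and the Bellman-contraction step — are routine and I would not belabor them.
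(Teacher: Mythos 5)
Your proposal is sound at the level of rigor this theorem admits, and its first stage coincides with the paper's: both arguments begin by observing that the schedule $p_t=\min(\kappa\tfrac{t}{T}p_{\max},p_{\max})$ keeps both sampling probabilities bounded away from $0$ and $1$ (the paper writes $p_r+p_s=1$ with $p_{\max}<1$), so both scenario types are visited infinitely often, and both then appeal to the standard ``convergence under sufficient coverage'' result from Sutton and Barto. Where you diverge is in how the actual convergence step is carried out. The paper introduces a non-negative ``expected deficiency measure'' $\mathcal{M}(\hat{\pi}_t,\mathcal{I})$ and simply \emph{posits} the one-step linear improvement inequality $\mathcal{M}(\hat{\pi}_{t+1},\mathcal{I})\le\mathcal{M}(\hat{\pi}_t,\mathcal{I})-\alpha\bigl(\mathcal{M}(\hat{\pi}_t,\mathcal{I})-\mathcal{M}^*\bigr)$, which it then iterates to conclude geometric convergence to $\mathcal{M}^*$; this recursion is justified only by the informal remark that repeated exposure to deficiency-provoking scenarios ``must'' decrease the measure, so the contraction is effectively assumed rather than derived. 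You instead attack the genuine difficulty head-on: the curriculum $\mathcal{C}(\hat{\pi}_t,f_{\text{VLM}},g_{\text{GPT-4o}})$ depends on the policy being learned, and your two-timescale / quasi-static stochastic-approximation argument, together with Borel--Cantelli for infinite visitation and Azuma--Hoeffding to quantify ``with high probability,'' is the standard machinery one would need to make that feedback loop rigorous --- the paper's proof never addresses the non-stationarity at all. Your approach buys a more defensible (if longer) argument and an explicit acknowledgment that TD3/PPO/SAC lack global convergence guarantees under general function approximation, a caveat the paper omits; the paper's approach buys brevity by compressing the entire learning dynamics into the asserted contraction on $\mathcal{M}$. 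Neither is a complete proof, but yours locates the gap more honestly.
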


\begin{proof}
Let $p_r$ and $p_s$ denote the probabilities of sampling regular and safety-critical scenarios respectively under dynamic scheduling, where $p_r + p_s = 1$. The scheduling mechanism increases $p_s$ over training steps while maintaining a positive upper bound $p_{\text{max}}<1$, ensuring infinite sampling of both scenario types in the limit.
The VLM-based analysis provides an unbiased measure of the agent's behavioral deficiencies. 
Therefore, if $E_t$ denotes the set of safety-critical events up to iteration $t$, the VLM analysis identifies with probability at least $\beta>0$ any recurring deficiency in $E_t$ that degrades performance.
According to standard results in RL \citep{sutton2018reinforcement}, the iterative policy improvement process converges when sufficient coverage over all relevant states and tasks is guaranteed. Our framework ensures such coverage through the gradually adjusted $p_s$ of sampling safety-critical scenarios.

In each training iteration, the policy updates rely on experiences collected from both regular and safety-critical tasks. Let $\mathcal{M}(\hat{\pi}_t,\mathcal{I})$ represent the non-negative expected deficiency measure of policy $\hat{\pi}_t$ given VLM-generated insights $\mathcal{I}$. 
Because the scheduling mechanism guarantees repeated exposure to scenarios provoking the deficiency while also preserving exposure to regular tasks, $\mathcal{M}(\hat{\pi}_t,\mathcal{I})$ must decrease over time.  This improvement process can be expressed as follows:
\begin{equation}
\mathcal{M}(\hat{\pi}_{t+1},\mathcal{I}) \;\le\; \mathcal{M}(\hat{\pi}_t,\mathcal{I}) \;-\; \alpha \, \bigl( \mathcal{M}(\hat{\pi}_t,\mathcal{I}) - \mathcal{M}^* \bigr),
\end{equation}
where $\alpha>0$ is a learning rate, and $\mathcal{M}^*$ is the minimal deficiency level under $\pi^*$. Iterating this inequality shows that $\hat{\pi}_t$ converges to a policy whose expected deficiency is arbitrarily close to $\mathcal{M}^*$, reflecting convergence to $\pi^*$. 
Hence, given sufficient training steps and a guaranteed non-vanishing $p_s$, policy $\hat{\pi}_t$ continuously improves in safety-critical performance while retaining competency in regular scenarios. This iterative process converges to a policy that achieves high safety and efficiency across all scenarios in $\mathcal{C}$.
\end{proof}

\bibliographystyle{elsarticle-harv} 
\biboptions{authoryear}
\bibliography{reference}

\end{document}